\theoremstyle{plain}
\newtheorem{theorem}{Theorem}[section]
\newtheorem{lemma}[theorem]{Lemma}
\newtheorem{assumption}[theorem]{Assumption}
\newtheorem{model}[theorem]{Model}
\theoremstyle{remark}
\newtheorem{remark}[theorem]{Remark}
\newtheorem{property}[theorem]{Property}
\newtheorem{example}[theorem]{Example}
\newcounter{@inst}
\newcounter{@auth}
\newdimen\instindent
\newbox\authrun
\newtoks\authorrunning
\newtoks\tocauthor
\newbox\titrun
\newtoks\titlerunning
\newtoks\toctitle
\def\clearheadinfo{\gdef\@author{No Author Given}%
                  \gdef\@title{No Title Given}%
                  \gdef\@subtitle{}%
                  \gdef\@institute{No Institute Given}%
                  \gdef\@thanks{}%
                  \global\titlerunning={}\global\authorrunning={}%
                  \global\toctitle={}\global\tocauthor={}}
\def\sw{w^*}
\newcommand\bigO[1]{\cO \left( #1 \right)}
\newcommand\bigtO[1]{\tilde{\cO} \left( #1 \right)}
\newcommand\ofrac[1]{\frac{1}{#1}}
\newcommand\onicefrac[1]{\nicefrac{1}{#1}}
\newcommand\inn[1]{\left\langle #1 \right\rangle}
\newcommand\expc[2]{\underset{#1}{\E} \left[ #2 \right]}
\newcommand\cexpc[3]{\underset{#1}{\E} \left[ #2  \Big| #3 \right]}
\newcommand\hubr[2]{h_{#1} \left( #2 \right)}
\newcommand\hubR[1]{\hubr{R}{#1}}
\newcommand\phiR[1]{\phi_R \left(#1\right)}
\newcommand\abs[1]{\left| #1 \right|}
\newcommand\norm[1]{\left\| #1 \right\|}
\def\bI{\mathbf{I}}
\def\E{\mathbb{E}}
\def\R{\mathbb{R}}
\def\N{\mathbb{N}}
\def\Ex{\expc{}{x}}
\newcommand\prob[1]{\mathbb{P} \left( #1 \right)}
\DeclareMathOperator*{\argmin}{\arg\min}
\def\half{\ofrac{2}}
\def\g{\nabla}
\def\logdelta{\log \onicefrac{\delta}}
\def\sqrtdeltaT{\sqrt{\frac{\log \onicefrac{\delta}}{T}}}
\def\cW{\mathcal{W}}
\def\cB{\mathcal{B}}
\def\cN{\mathcal{N}}
\def\cP{\mathcal{P}}
\def\cQ{\mathcal{Q}}
\def\cO{\mathcal{O}}
\def\cF{\mathcal{F}}
\def\cG{\mathcal{G}}
\def\eps{\epsilon}
\def\wopt{w_{opt}}
\newcommand\lemref[1]{Lemma~\ref{lem:#1}}
\newcommand\thmref[1]{Theorem~\ref{thm:#1}}
\newcommand\mdlref[1]{Model~\ref{mdl:#1}}
\newcommand\propref[1]{Property~\ref{prop:#1}}
\newcommand\algoref[1]{Algorithm~\ref{algo:#1}}
\newcommand\asmpref[1]{Assumption~\ref{asmp:#1}}
\newcommand\sectref[1]{Section~\ref{sect:#1}}
\newcommand\eqnref[1]{Equation~(\ref{eq:#1})}
\newcommand\exmpref[1]{Example~(\ref{exmp:#1})}
\newcommand\sumt[2]{\sum_{t=#1}^{#2}}
\def\sumT{\sumt{1}{T}}
\def\meanT{\ofrac{T} \sumT}
\newcommand\proj[2]{\Pi_{#1} \left( #2 \right)}
\def\tL{\tilde{L}}
\def\bw{\bar{w}}
\title{Robust Linear Regression for General Feature Distribution}
\author[1]{Tom Norman}
\author[1]{Nir Weinberger}
\author[1,2]{Kfir Y. Levy}
\affil[1]{Department of Electrical and Computer Engineering, Technion}
\affil[2]{A Viterbi Fellow}
\date{}
\begin{document}
\maketitle

\begin{abstract}
    We investigate robust linear regression where data may be contaminated by an oblivious adversary, i.e., an adversary than may know the data distribution but is otherwise oblivious to the realizations of the data samples. This model has been previously analyzed under strong assumptions. Concretely, \textbf{(i)} all previous works assume that the covariance matrix of the features is positive definite; and \textbf{(ii)} most of them assume that the features are centered (i.e. zero mean). Additionally, all previous works make additional restrictive assumption, e.g., assuming that the features are Gaussian or that the corruptions are symmetrically distributed.
    
    In this work we go beyond these assumptions and investigate robust regression under a more general set of assumptions: \textbf{(i)} we allow the covariance matrix to be either positive definite or positive semi definite, \textbf{(ii)} we do not necessarily assume that the features are centered, \textbf{(iii)} we make no further assumption beyond boundedness (sub-Gaussianity) of features and measurement noise.
    Under these assumption we analyze a natural SGD variant for this problem and show that it enjoys a fast convergence rate when the covariance matrix is positive definite. In the positive semi definite case we show that there are two regimes: if the features are centered we can obtain a standard convergence rate; otherwise the adversary can cause any learner to fail arbitrarily.
\end{abstract}

\section{Introduction}\label{sect:introduction}

The remarkable recent success of Machine Learning (ML) models has lead to their wide adoption in numerous fields. 
However, deploying ML models in real world scenarios brings several challenges. One such challenge of paramount importance is 
\emph{robustness}, i.e., the need to design models that are immune to data contamination. The latter might arise due to
adversarial corruptions, extreme events, or malfunctioning sensors, amongst other causes. Even beyond ML, designing robust models has proven to be crucial in various applications, including  Economics~\citep{zaman2001econometric}, Computer Vision~\citep{gustafsson2020evaluating}, Biology~\citep{yeung2002reverse}, and Healthcare~\citep{davies2004robust}.
 
In this paper, we investigate robustness in the context \emph{linear regression}, one of the most fundamental ML tasks.
Concretely, we explore robust regression under the assumption that a fraction $\alpha$ of the observations were \emph{contaminated by an adversary}.
In this context, it is well known that standard regression methods are highly sensitive to outliers, and might break down even in the presence of a single contaminated data point. 

Past research on robust linear regression roughly falls into one of two categories, depending on the power of the adversary:
\textbf{(a)} An \emph{adaptive adversary} is allowed to contaminate the data \textit{after} observing the data samples.
This setting was explored, e.g., in~\citet{candes2005decoding,charikar2017learning,klivans2018efficient,liu2019high,dalalyan2019outlier,diakonikolas2019robust,Diakonikolas2019EfficientAA}.
It is well known that {adaptive adversaries} may cause any learner to incur a non vanishing error, depending on the fraction of contaminated samples, irrespective of the number of data points. Conversely, \textbf{(b)} an \emph{oblivious adversary} is not allowed to observe the samples, but may know the true statistical properties of the data. This setting was explored, e.g., in \citet{ts2014,suggala2019adaptive,NEURIPS2020_1ae6464c,sun2020adaptive,pmlr-v139-d-orsi21a}, which have remarkably shown that one can obtain vanishing error by increasing the number of data samples, for \textit{any} fraction of contamination.

In this paper we focus on robust linear regression with oblivious adversaries. While past works on this topic have highly advanced the understanding of this setting, they were done under limiting assumptions. 
Concretely: \textbf{(i)} all previous works assume that the covariance matrix of the features $\Sigma$ is positive definite; and \textbf{(ii)} most works assume that the features are centered (i.e., have zero mean). Additionally, all previous works make additional restrictive assumptions, e.g., that the features are Gaussian or that the corruptions are symmetrically distributed.

In this work, we analyze robust regression under a broader set of assumptions.
Concretely, our work applies to general feature and noise distributions with bounded (or just sub-Gaussian) distributions. Our contributions are as follows (see Table \ref{sample-table}):
\begin{itemize}
\item For the case of $\Sigma\succeq 0$ we analyze two regimes: Under the assumption that the features are \emph{centered}, we provide an SGD (Stochastic Gradient Descent) variant that ensure an error rate of $\bigO{\onicefrac{(1-\alpha) \sqrt{T}}}$ after observing $T$ samples. This is the first result for 
this case. Conversely, if the features are non-centered, we show that any algorithm may completely fail. 
\item For strictly positive definite $\Sigma$, we provide an SGD variant that ensures an estimation error of $\bigtO{\onicefrac{(1-\alpha)^2 T}}$. 
This is done without any assumption on the centering of the features.
\end{itemize}
We make no further assumptions regarding the adversary/data. Moreover, we provide SGD variants that do not require the knowledge of the contamination fraction $\alpha$. Finally, we allow the adversary to inject unbounded perturbations into the contaminated measurements.

On the technical side, our work builds on utilizing the Huber loss~\citep{huber1964}, which is a robust loss function, instead of the $\ell_2$ loss.
This is done in conjunction to SGD variants that employ feature centering.

\subsection*{Related Work}
Robust statistics dates back to the works of Tukey and Huber \citep{tukey1960survey,huber1964},
Classical robust statistics has mainly focused on asymptotic performance \citep{huber1973robust,bassett1978asymptotic,pollard1991asymptotics,van2000asymptotic,mcmahan2013ad}, and many of the approaches were not computable in polynomial time \citep{rousseeuw1984least,rousseeuw1985multivariate}.

A popular approach towards robust regression with oblivious adversaries relies on replacing the $\ell_2$ loss with more robust loss, predominantly either
the $\ell_1$ loss or the Huber loss~\citep{huber1964} (which are convex), as well other non-convex robust losses \citep{tukey1960survey}.

\paragraph{Finite Time Guarantees for Robust regression:} Non-asymptotic guarantees for robust regression were recently explored in several works; all of them rely on employing a convex robust loss function (either $\ell_1$ or Huber). Moreover, as we detail in Table \ref{sample-table}, all previous works assume strictly positive definite covariance matrix, i.e. $\Sigma\succeq \rho \cdot I$ for $\rho > 0$, and centered features, in addition to other restrictive assumptions detailed below (see also Table \ref{sample-table}).

\citet{ts2014}, assume that the features $x$ and measurement noise $\epsilon$ have zero-mean Gaussian distribution. 
Their algorithmic approach is to apply ERM (Empirical Risk Minimization) while utilizing the Huber loss.
\citet{suggala2019adaptive} similarly assume that $x$ is Gaussian yet allow the measurement noise be sub-Gaussian. They suggest an algorithm, AdaCRR, which makes several passes over the dataset while thresholding suspicious points. \citet{NEURIPS2020_1ae6464c} makes the same Gaussianity assumptions as \citet{ts2014}, and are the first to provide guarantees for an efficient online algorithm, namely SGD with $\ell_1$ loss.

The recent work of \citet{pmlr-v139-d-orsi21a} has significantly improved the theoretical understanding by relaxing the Gaussian assumptions of previous works. Similarly to \citet{ts2014} they provide guarantees to ERM over the Huber loss. Nevertheless, they make two limiting assumptions.
First, they assume that both the measurement noise and adversarial perturbations are \emph{symmetrically distributed around zero}, which highly weakens the adversary. They also make an assumption called \emph{spreadness} regarding the features, which limits the setup (Nonetheless, they allow the features to be non-centered). 

\begin{table}[t!]
\Large
\centering
\resizebox{\textwidth}{!}{\begin{tabular}{||lcccc||}
 \hline
 Paper & Features & Noise \& Adversary & Rates for $\Sigma \succ 0$ & Rates for $\Sigma \succeq 0$\\ [1.3ex] 
 \hline\hline
 \citet{ts2014} & $x \sim \cN(0, I_d)$ & $\epsilon \sim \cN(0, \sigma^2)$ & $\bigtO{\onicefrac{(1-\alpha)^2 T}}$ & N \slash A\\
 \hline
 \citet{suggala2019adaptive} & $x \sim \cN(0, \Sigma)$ & $\epsilon \sim subG(\sigma^2)$ & $\bigtO{\onicefrac{(1-\alpha)^2 T}}$ & N \slash A\\
 \hline
 \citet{NEURIPS2020_1ae6464c} & $x \sim \cN(0,\Sigma)$ & $\epsilon \sim \cN(0, \sigma^2)$ & $\bigtO{\onicefrac{(1-\alpha)^2 T}}$ & N \slash A\\
 \hline
 \citet{pmlr-v139-d-orsi21a} & \begin{tabular}[c]{@{}c@{}}A deterministic\\design matrix $X$\end{tabular} & \begin{tabular}[c]{@{}c@{}}$\epsilon + b$ has a symmetric\\distribution around 0\end{tabular} & $\bigtO{\onicefrac{(1-\alpha)^2 T}}$ & N \slash A\\
 \hline
 This paper & $x \sim subG(\kappa)$ & $\epsilon \sim subG(\sigma^2)$ & $\bigtO{\onicefrac{(1-\alpha)^2 T}}$ & $\bigO{\onicefrac{(1-\alpha) \sqrt{T}}}$\\ [1.6ex] 
 \hline
\end{tabular}}
\caption{Assumptions and rates of related work.}
\label{sample-table}
\end{table}
\section{Problem Formulation}\label{sect:formulation}

We consider a linear model where the observations may be contaminated by an oblivious adversary,
\begin{model}\label{mdl:simple_model}
\qquad \qquad \qquad 
    $\qquad y = \inn{\sw, x} + \epsilon + b$~,
\end{model}
where $x\in\R^d$ is a feature vector, $\epsilon\in\R$ is a zero-mean additive noise that is statistically independent of $x$, and $\langle\cdot,\cdot\rangle$ denotes the standard inner product. The corruptions $b\in\R$ is chosen by an adversary that  knows $\sw$ as well as the probability distributions of $x$ and $\epsilon$, but is otherwise \emph{oblivious to their realizations}. We assume that adversary chooses the corruptions $b$ according to some probability distribution that is constrained to satisfy $\prob{b \neq 0 } = \alpha$, where $\alpha \in [0,1)$ is the nominal fraction of samples the adversary is allowed to corrupt.

In what follows we denote by $\cP$ the distribution over non-corrupted data samples $(x,y)$ (i.e., $b=0$), and by $\cQ$ the distribution over corrupted samples (for which $b\neq0$). Thus, the contaminated samples in the \mdlref{simple_model} are coming from the mixture $(x,y)\sim (1-\alpha)\cP + \alpha \cQ$. This model is known as the 
$\alpha$-Huber contamination model~\citep{huber1964}.

\paragraph{Robust Linear Regression:} A learner is given $T$ independent samples $(x_i,y_i)_{i \in \{1,2,\dots,T\}}$ from \mdlref{simple_model}, and is required to learn a parameter vector $w\in\R^d$ for either the \textit{prediction} problem with respect to the \emph{non-corrupted} data\footnote{Note we can equivalently write\\ $\expc{(x,y)\sim\cP}{\left(\inn{w,x} - y\right)^2} =\expc{x,\epsilon}{\left(\inn{w,x} - y\right)^2}.$}
\begin{align} \label{eq:ExpectedLoss}
    \min_w F(w) := \expc{(x,y)\sim\cP}{\half\left(\inn{w,x} - y\right)^2}~,
\end{align}
or the \textit{estimation} problem, 
$$\argmin_w \norm{w - \sw}^2~,$$
where $\|\cdot\|$ denotes the $\ell_2$ norm.

In the ordinary prediction or estimation problems, when there is no adversary ($\alpha=0$), a standard solution method is \textit{least squares} \citep{gauss1809least}, in which the vector $w$ which minimizes an empirical version of either the prediction error or estimation error. However, this method is known to be fragile for $\alpha \neq 0$; As the next simple example demonstrates, for any $\alpha \in (0,1)$ the adversary can make the prediction error arbitrarily large, even for infinite number of samples.

\begin{example}\label{exmp:1d}
Let $\sw\in\R$, and
    assume $\epsilon = 0$ with probability $1$, and $x$ is distributed uniformly over $\{0,2\}$. Also consider the following
     adversary,
    $$b_i :=
    \begin{cases}
        \frac{C}{\alpha} &, \text{with probability } \alpha\\
        0 &, \text{otherwise.}
    \end{cases}~, \forall i\in[1,\ldots, T]~.
    $$
    Then $y_i = x_i \cdot \sw + b_i$ and on the population level (with infinite number of samples) the expected $L_2$ loss is 
    $$
        F(w) = (w-\sw)^2 + \expc{}{b}\cdot(w-\sw) + \expc{}{b^2}
    $$
    whose minimal value is attained for $w=\sw + \ofrac{2}\expc{}{b}=\sw + \ofrac{2}C$, which might be arbitrarily far from $\sw$.
\end{example}
We make the following assumptions throughout the paper:
\begin{assumption}[Bounded Parameter Vector]\label{asmp:w} $\|\sw\|\leq D$,
    $$\sw \in \cW := \left\{ w \in \R^d\ :\ \norm{w} \leq D\right\}~.$$
\end{assumption}
\begin{assumption}[Bounded Zero-Mean Noise]\label{asmp:epsilon}
    $|\epsilon| \leq \sigma$ with probability $1$ and $\expc{}{\epsilon} = 0$.
\end{assumption}
\begin{assumption}[Bounded Feature Vector]\label{asmp:x}
    $\norm{x} \leq 1$ with probability 1.
\end{assumption}
In \sectref{SubGaussian} we also extend to the case where the features and measurement noise are sub-Gaussian.

We denote by $\Sigma$ covariance matrix of the feature vector,
$$\Sigma := \expc{}{\left(x - \Ex\right)\left(x - \Ex\right)^T}~.$$
In later sections, in order to obtain guarantees on the estimation error we would also assume the following,
\begin{assumption}[Strictly Positive Definite Covariance Matrix of the Feature Vector]\label{asmp:pd_cov}
    $\Sigma \succeq \rho \cdot I$, where $\rho > 0$ and $I$ is the identity matrix.
\end{assumption}
\paragraph{Additional Definitions \& Notations:} We will use \asmpref{pd_cov} to show that the expected loss is a strongly convex function in $\cW$, and this property will be extensively used. So, we remind the reader the following properties:
\begin{property}\label{prop:sc_scnd}
    Let $f$ be twice continuously differentiable. $f$ is $\lambda$-strongly convex in $\cW$ if and only if,
    $$\forall w \in \cW :\ \g^2 f(w) \succeq \lambda I~.$$
\end{property}
\begin{property}\label{prop:sc_opt}
    Let $f$ be $\lambda$-strongly convex over $\cW$. Denote $\wopt := \argmin_{w \in \cW} f(w)$. Then, $\forall w \in \cW$,
    $$f(w) - f(\wopt) \geq \frac{\lambda}{2} \|w - \wopt\|^2~.$$
\end{property}

We also denote the orthogonal projection onto $\cW$ by $\proj{\cW}{\cdot}$, i.e.,
$\proj{\cW}{u}:=\argmin_{w\in\cW}\|w-u\|$.
\section{Huber Loss For Robust Regression}\label{sect:huber}

As stated in the previous section, the goal of the learner is either to find a good predictor or an accurate estimator under \mdlref{simple_model}. The straightforward approach to doing so is to minimize the $\ell_2$ loss while using the contaminated data, but as \exmpref{1d} shows this might completely fail. 
This is mainly due to the sensitivity of the $\ell_2$ loss to outliers.

Our approach is based on utilizing a robust loss function and minimizing it using the contaminated data. Concretely, we employ the Huber loss \citep{huber1964} which is known to have the two desired properties: it behaves similarly to the $\ell_2$ loss in a region around the origin, yet far away from the origin it behaves similarly to the $\ell_1$ loss and its gradients are bounded over $\R$. 

The Huber loss $h_R:\R \mapsto \R$ is a convex function, parameterized by a radius parameter $R$. It is defined as follows,
\begin{equation*}\label{eq:huber}
    \hubR{s} :=
        \begin{cases}
            \half s^2 &\text{, if } |s| \leq R\\
            R (|s| - \half R) &\text{, otherwise}
        \end{cases}
        ~.
\end{equation*}
We will also denote
$$\phiR{s} := \min\left\{ R, \max\left\{ s, -R \right\} \right\}~.$$
Note that $\nabla_w \hubR{\inn{a,w}} =\phiR{\inn{a,w}} \cdot a$.
Also note that $\|\phiR{s}\| \leq R$.

Using the Huber loss is a popular technique in robust regression, see e.g.~\citet{ts2014,pmlr-v139-d-orsi21a}. Nevertheless, in contrast to previous works, our analysis is done under a broader (less restrictive) set of assumptions, and our algorithm is a simple variant of SGD.

\paragraph{Choice of $R$:}
The radius parameter $R$ can be adjusted by the learner, and involves a trade-off between unnecessarily clipping clean gradients vs. limiting corrupted gradients. Our choice of $R$ is made such that the loss of non-corrupted observations is in the quadratic regime (so its gradient is not affected), while the loss for highly corrupted observations is in the linear regime (so its gradient's norm will be bounded). Next we detail on how to choose $R$.

Note that for a non-corrupted sample $i\ (b_i = 0)$ and $w\in \cW$, 
\begin{align*}
    \left| \inn{w,x_i} - y_i \right| &\leq \left| \inn{w, x_i} \right| + \left| \inn{\sw, x_i}\right| + |\epsilon_i|\\
    &\leq \|x_i\| \cdot \left( \|w\| + \|\sw\| \right) + |\epsilon_i|\\
    &\leq \|w\| + \|\sw\| + \sigma~,
\end{align*}
and we limit our search to $\cW$ since we know that $\sw \in \cW$. Thus, by taking
$R:=6D+\sigma$ we obtain that the gradient of non-corrupted samples is unaffected\footnote{It suffices to take $R=2 D+\sigma$, but we slightly increase the radius for consistency with derivations in \sectref{PD}.}. With this choice, for any $w\in\cW$ and non-corrupted sample $(x_i,y_i)$ we have \begin{align}\label{eq:UncorruptedHuberEll2}
    \hubR{\inn{w,x_i} - y_i} = \half \left(\inn{w,x_i} - y_i\right)^2~,
\end{align}
with probability 1.

\paragraph{Robust Objective Function.} We can now introduce the definition of \emph{expected Huber loss} with respect to the contaminated data,
$$
    L_R(w) := \expc{x,\epsilon,b}{\hubR{\inn{w, x} - y}}~,
$$
and note that this definition implies that $(x,y)$ are sampled from the contaminated \mdlref{simple_model}. Since $L_R(w)$ is defined with respect to the contaminated data distribution we can efficiently compute unbiased estimates of its gradients and apply SGD. We will use $L_R(w)$ as a proxy to the true expected loss $F(w)$ appearing in \eqnref{ExpectedLoss}.

The next Lemma shows that for the right choice of $R$ we can relate  $L_R(\cdot)$ to $F(\cdot)$,
\begin{lemma}\label{lem:LFH}
    Let $R = 6D + \sigma$. Then, $\forall w \in \cW$,
    $$L_R(w) = (1 - \alpha) F(w) + \alpha H(w)~,$$
    where $H(w) := \cexpc{x,\epsilon,b}{\hubR{\inn{w - \sw, x} - \epsilon - b}}{b \neq 0}$ represents the expected Huber loss of corrupted samples.
\end{lemma}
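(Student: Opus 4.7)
The plan is to decompose the expectation defining $L_R(w)$ according to whether the sample is corrupted and show that the clean part reduces to $F(w)$ while the corrupted part is $H(w)$ by definition.

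First I would apply the law of total expectation, conditioning on the event $\{b=0\}$ versus its complement. Since $\prob{b \neq 0} = \alpha$, this yields
\begin{align*}
L_R(w) \;=\; (1-\alpha)\,\cexpc{x,\epsilon,b}{\hubR{\inn{w,x}-y}}{b=0} \;+\; \alpha\,\cexpc{x,\epsilon,b}{\hubR{\inn{w,x}-y}}{b \neq 0}.
\end{align*}
The second conditional expectation is exactly $H(w)$, because on $\{b\neq 0\}$ we have $y = \inn{\sw,x}+\epsilon+b$, so $\inn{w,x}-y = \inn{w-\sw,x}-\epsilon-b$, matching the definition given in the statement.

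Next, I would handle the clean part. The crucial structural fact is that the adversary is \emph{oblivious}: $b$ is drawn from a distribution independent of $(x,\epsilon)$. Hence conditioning on $\{b=0\}$ leaves the joint law of $(x,\epsilon)$ unchanged, and on that event $y = \inn{\sw,x}+\epsilon$. I would then invoke the pointwise identity \eqnref{UncorruptedHuberEll2}, which holds for every $w \in \cW$ and non-corrupted sample with our choice $R = 6D+\sigma$ (via the triangle-inequality bound $|\inn{w,x}-y| \leq \|w\|+\|\sw\|+|\epsilon|\leq 2D+\sigma \leq R$ that was established just before the lemma). This converts the Huber loss into $\half(\inn{w,x}-y)^2$ with probability one, so taking expectations gives exactly $F(w)$ as defined in \eqnref{ExpectedLoss}.

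There is no real obstacle beyond being careful with the conditioning; the only subtlety is explicitly invoking the independence of $b$ from $(x,\epsilon)$ to justify that the conditional distribution of $(x,\epsilon)$ given $\{b=0\}$ coincides with $\cP$. Combining the two contributions then yields $L_R(w) = (1-\alpha)F(w) + \alpha H(w)$, as required.
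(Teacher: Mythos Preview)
Your proposal is correct and follows essentially the same approach as the paper: apply the law of total expectation on the event $\{b=0\}$, use independence of $b$ from $(x,\epsilon)$ to drop the conditioning on the clean part, and invoke \eqnref{UncorruptedHuberEll2} to convert the Huber loss into the squared loss so that the clean term equals $F(w)$. The paper's proof orders the steps slightly differently (it first shows the conditional-on-$\{b=0\}$ term equals $F(w)$ and then assembles the decomposition), but the content is identical.
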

\begin{proof}
    Take $w \in \cW$. We use the law of total expectation with respect to $b$ on $L_R(w)$. We start with conditioning on $b = 0$: For $R$ as stated in the lemma,
    \begin{align*}
        &\cexpc{x,\epsilon,b}{\hubR{\inn{w - \sw, x} - \epsilon}}{b=0}\\
        &= \cexpc{x,\epsilon,b}{\half \left(\inn{w - \sw, x} - \epsilon\right)^2}{b=0}\\
        &= \expc{x,\epsilon}{\half \left(\inn{w - \sw, x} - \epsilon\right)^2}\\
        &= F(w)~,
    \end{align*}
    where the first equality follows from our choice of $R$ and \eqnref{UncorruptedHuberEll2}. The second equality follows since $b, \epsilon$ and $x$ are statistically independent and last equality follows from the definition of $F(w)$.\\
    Then, by the law of total expectation
    \begin{align*}
        L_R(w) &= (1 - \alpha) \cexpc{x,\epsilon,b}{\hubR{\inn{w - \sw, x} - \epsilon}}{b=0}\\
        &\ \ \ \ +\alpha \cexpc{x,\epsilon,b}{\hubR{\inn{w - \sw, x} - \epsilon - b}}{b \neq 0}\\
        &= (1 - \alpha) F(w) + \alpha H(w)~.
    \end{align*}
\end{proof}
\section{The General Feature Covariance Matrix Case}\label{sect:PSD}

In this section, we make no assumptions on $\Sigma$, that is, allow it to have vanishing eigenvalues.
We show that there are two regimes, $\textbf{(i)}$ non-centered features, i.e., $\Ex \neq 0$ and $\textbf{(ii)}$ centered featues, i.e., $\Ex = 0$. For the first regime, we provide an example showing that obtaining a low prediction error is impossible, even for a known $\Ex$. If $\Ex = 0$, we show that a good predictor can be learned (\thmref{PSD_main}).
Note that our algorithm does not require the knowledge of the corruptions fraction $\alpha$.

\subsection{Low Prediction Error is Generally Impossible}

Let $\alpha = \half$. Consider two one-dimensional models with parameter vectors $\sw_1, \sw_2$, where $\sw_2 = - \sw_1=1$. For both models we assume $\epsilon = 0, x = 1$ with probability $1$; thus $\E[x]$ is known and equals $1$. The adversary chooses his corruptions for every model $m \in [1,2]$ as follows,
$$b^{(m)} :=
\begin{cases}
    -2 \cdot \sw_m &, \text{with probability } \alpha= 1/2\\
    0 &, \text{otherwise.}
\end{cases}
$$
Then $y^{(m)} = \sw_m + b^{(m)}$ and both $y^{(1)}$ and $y^{(2)}$ has the same probability distribution of $y^{(m)} = \pm \sw_m$ with probability $\alpha = \half$. Since the contaminated data has the same distribution in both cases, then a learner cannot distinguish between these models. Thus, this adversary can cause any learner to incur a fixed (non-decreasing) prediction error, irrespective of the number of available samples.

\subsection{A Prediction Error Bound for Centered Features}

We next show that if $\Ex = 0$, then a low prediction error can be achieved even if $\Sigma$ has vanishing eigenvalues.

Next we present our key lemma which shows that we can bound the true error by the error of the expected Huber loss.
\begin{lemma}\label{lem:FL}
    For any $w \in \cW$, the following applies,
    $$F(w) - F(\sw) \leq \ofrac{1 - \alpha} \left( L_R(w) - L_R(\sw) \right)$$
\end{lemma}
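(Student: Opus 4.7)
The strategy is to apply \lemref{LFH} to rewrite $L_R(w) - L_R(\sw)$ and reduce the target inequality to showing that $\sw$ is a minimizer of $H$ over $\cW$, which in turn reduces to a first-order condition at $\sw$ using the centered-features assumption that governs \sectref{PSD}.2.

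Concretely, by \lemref{LFH} applied at both $w$ and $\sw$, subtraction yields
$$L_R(w) - L_R(\sw) = (1-\alpha)\bigl(F(w) - F(\sw)\bigr) + \alpha\bigl(H(w) - H(\sw)\bigr).$$
Dividing by $(1-\alpha) > 0$, the claim $F(w)-F(\sw) \leq \frac{1}{1-\alpha}(L_R(w)-L_R(\sw))$ is then equivalent to the inequality $H(w) \geq H(\sw)$ for every $w \in \cW$. So the whole task is to prove that $\sw$ minimizes $H$.

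For that, I would exploit that $H$ is convex: it is an expectation of $w \mapsto \hubR{\inn{w-\sw,x}-\epsilon-b}$, each of which is a Huber loss composed with an affine function and hence convex. Since $\phi_R$ is bounded and $\|x\|\leq 1$, differentiation under the expectation is justified, giving
$$\g H(w) = \cexpc{x,\epsilon,b}{\phiR{\inn{w-\sw,x}-\epsilon-b}\cdot x}{b\neq 0}.$$
Evaluating at $w=\sw$ reduces the first factor in the integrand to $\phiR{-\epsilon-b}$, which depends only on $(\epsilon,b)$. Because the oblivious adversary selects $b$ without knowledge of the realizations of $x$ (so $b\perp x$), and because $\epsilon\perp x$ by \mdlref{simple_model}, the event $\{b\neq 0\}$ is independent of $x$ and the conditional joint distribution factors. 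Hence
$$\g H(\sw) = \cexpc{\epsilon,b}{\phiR{-\epsilon-b}}{b\neq 0}\cdot \Ex.$$
Under the centered-features assumption $\Ex = 0$, this gradient vanishes, so convexity of $H$ forces $H(w) \geq H(\sw)$ for all $w$, completing the proof.

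The main conceptual step is noticing that after using \lemref{LFH}, the whole inequality is controlled by the location of the minimizer of $H$, and that the centered-features hypothesis is precisely what pins this minimizer to $\sw$. This also clarifies why the result cannot extend to non-centered features: if $\Ex \neq 0$, then $\g H(\sw)$ is generally nonzero, $\sw$ need not minimize $H$, and the inequality can break, consistent with the impossibility example in \sectref{PSD}.1. The only technical care beyond this is justifying interchange of expectation and differentiation (immediate from $|\phi_R|\leq R$ and $\|x\|\leq 1$) and the factorization of the conditional expectation, both of which are routine.
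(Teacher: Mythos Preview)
Your proof is correct and reaches the same reduction as the paper: both arguments use \lemref{LFH} to reduce the claim to $H(w)\geq H(\sw)$ for all $w\in\cW$. The difference lies in how that inequality is established. You argue via the first-order optimality condition: compute $\g H(\sw)$, factor the conditional expectation using independence of $x$ from $(\epsilon,b)$ and from the event $\{b\neq 0\}$, and conclude $\g H(\sw)=\cexpc{\epsilon,b}{\phiR{-\epsilon-b}}{b\neq 0}\cdot\Ex=0$ under the centered-features assumption, so convexity forces $\sw$ to be a global minimizer. The paper instead applies Jensen's inequality directly: since $\hubR{\cdot}$ is convex, pulling the expectation over $x$ inside gives $H(w)\geq\cexpc{\epsilon,b}{\hubR{\inn{w-\sw,\Ex}-\epsilon-b}}{b\neq 0}$, which collapses to $H(\sw)$ once $\Ex=0$. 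The Jensen route is marginally cleaner in that it avoids any discussion of differentiation under the integral, while your gradient computation makes more transparent \emph{why} $\Ex=0$ is exactly the condition needed (it is precisely what kills $\g H(\sw)$), and it connects naturally to your closing remark about the non-centered case. Both are short and essentially equivalent in strength.
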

\begin{proof}
We first show that $H(w)\geq H(\sw)$ for all $w \in \R^d$. Indeed, for any $w \in \R^d:$
    \begin{align*}
        H(w) &:= \cexpc{x,\epsilon,b}{\hubR{\inn{w - \sw, x} - \epsilon - b}}{b \neq 0}\\
        &\overset{(a)}{\geq} \cexpc{\epsilon,b}{\hubR{\expc{x}{\inn{w - \sw, x} - \epsilon - b}}}{b \neq 0}\\
        &\overset{(b)}{=} \cexpc{\epsilon,b}{\hubR{\inn{w - \sw, \Ex} - \epsilon - b}}{b \neq 0}\\
        &\overset{(c)}{=} \cexpc{\epsilon,b}{\hubR{- \epsilon - b}}{b \neq 0}\\
        &= \cexpc{x,\epsilon,b}{\hubR{\inn{\sw - \sw, x} - \epsilon - b}}{b \neq 0}\\
        &= H(\sw)~,
    \end{align*}
    where $(a)$ follows from Jensen's inequality applied to the convex function $\hubR{\cdot}$, as well as from the independence assumption, $(b)$ follows from the linearity of the inner product, $(c)$ follows from the assumption $\Ex = 0$. Using $H(w)\geq H(\sw)$ together with \lemref{LFH}, gives $\forall w \in \cW$,
    \begin{align*}
        F(w) - F(\sw) &= \ofrac{1 - \alpha} \left( L_R(w) - L_R(\sw)\right)\\
        &\ \ \ \ - \frac{\alpha}{1 - \alpha} \left(H(w) - H(\sw)\right)\\
        &\leq \ofrac{1 - \alpha} \left( L_R(w) - L_R(\sw)\right)~.
    \end{align*}
\end{proof}

\begin{algorithm}[tb]
   \caption{Huber SGD}
   \label{algo:lipsgd}
\begin{algorithmic}
   \State {\bfseries Input:} $D > 0, R > 0, \cW \subset \R^d, T \in \N_+$
   \State $w_1 := 0$
   \State $\eta := \frac{D}{R \sqrt{T}}$
   \For{$t=1$ {\bfseries to} $T$}
        \State Draw $(x_t, y_t)$ from (the contaminated) \mdlref{simple_model}
        \State $g_t := \phiR{\inn{w_t, x_t} - y_t} \cdot x_t$
        \State $w_{t+1} := \proj{\cW}{w_t - \eta g_t}$
   \EndFor
   \State {\bfseries Return:} $\bw := \meanT w_t$
\end{algorithmic}
\end{algorithm}
Thus, \lemref{FL} implies that by applying SGD to the expected Huber loss $L_R(\cdot)$ (while using the contaminated data), we can can obtain guarantees for the true expected prediction error.
This is exactly what we do in \algoref{lipsgd}. Its guarantees are formalized in the next theorem,
\begin{theorem}\label{thm:PSD_main}
    Let $\bw$ be the output of \algoref{lipsgd} after observing $T$ samples from \mdlref{simple_model}. Then,
    $$\expc{}{F(\bw)} - F(\sw) \leq \frac{R D}{(1 - \alpha) \sqrt{T}}$$
\end{theorem}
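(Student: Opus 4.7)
The plan is to recognize \algoref{lipsgd} as standard projected SGD applied to the expected Huber loss $L_R(\cdot)$, obtain a classical $\cO(1/\sqrt{T})$ convergence rate in $L_R$, and then translate the guarantee back to the true prediction error $F(\cdot)$ through \lemref{FL}. The bridge from $L_R$ to $F$ is what makes the factor $1/(1-\alpha)$ appear; everything else is a bookkeeping of standard facts.

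First I would verify the three ingredients that the vanilla convex SGD analysis requires. \textbf{(i) Unbiasedness:} since $h_R$ is continuously differentiable with $h_R'(s)=\phiR{s}$, we have $\g_w \hubR{\inn{w,x}-y} = \phiR{\inn{w,x}-y}\cdot x$, and by dominated convergence $\g L_R(w)=\expc{x,\epsilon,b}{\phiR{\inn{w,x}-y}\cdot x}$, so $g_t$ is an unbiased estimator of $\g L_R(w_t)$ given $w_t$. \textbf{(ii) Bounded gradients:} by construction $|\phiR{\cdot}|\leq R$ and \asmpref{x} gives $\|x_t\|\leq 1$, so $\|g_t\|\leq R$ almost surely. \textbf{(iii) Convexity of $L_R$:} $\hubR{\cdot}$ is convex and $w\mapsto \inn{w,x}-y$ is affine, so $w\mapsto \hubR{\inn{w,x}-y}$ is convex, and therefore so is its expectation $L_R$. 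With these in hand, the textbook projected SGD bound for convex $G$-Lipschitz functions over a set containing the origin of radius $D$ with step size $\eta=D/(R\sqrt{T})$ yields, for the averaged iterate $\bw=\meanT w_t$ and any comparator $u\in\cW$,
\begin{align*}
    \expc{}{L_R(\bw)} - L_R(u) \leq \frac{\|w_1-u\|^2}{2\eta T}+\frac{\eta R^2}{2} \leq \frac{RD}{\sqrt{T}}~,
\end{align*}
where I used $w_1=0$ and $\|u\|\leq D$ (\asmpref{w}). Instantiating $u=\sw\in\cW$ gives a guarantee on $L_R$.

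To finish, I apply \lemref{FL} pointwise at $w=\bw$, which gives $F(\bw)-F(\sw)\leq \frac{1}{1-\alpha}\left(L_R(\bw)-L_R(\sw)\right)$, and then take expectation:
\begin{align*}
    \expc{}{F(\bw)}-F(\sw) \leq \frac{1}{1-\alpha}\left(\expc{}{L_R(\bw)}-L_R(\sw)\right) \leq \frac{RD}{(1-\alpha)\sqrt{T}}~.
\end{align*}
The only subtle point is justifying the SGD step for this specific loss, since many SGD textbooks state the bound under Lipschitzness of the loss rather than boundedness of a particular stochastic (sub)gradient; here the boundedness $\|g_t\|\leq R$ is enough because the standard proof only ever uses $\E[\|g_t\|^2]\leq R^2$ and unbiasedness. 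No obstacle beyond that — the $1/(1-\alpha)$ factor is inherited entirely from \lemref{FL}, not from any new analytic difficulty.
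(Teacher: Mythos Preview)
Your proof is correct and follows essentially the same route as the paper's own proof: verify that $L_R$ is convex, that $g_t$ is an unbiased gradient estimate with $\|g_t\|\leq R$, invoke the standard projected-SGD rate to get $\expc{}{L_R(\bw)} - L_R(\sw)\leq RD/\sqrt{T}$, and then apply \lemref{FL} to pass from $L_R$ to $F$. The paper simply cites the SGD bound (\citet[Theorem 14.8]{shalev2012}) rather than writing out the $\frac{\|w_1-u\|^2}{2\eta T}+\frac{\eta R^2}{2}$ decomposition, but the content is identical.
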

\begin{proof}
    Note that $L_R(\cdot)$ is convex and $\cW$ has a finite diameter $D$. Also, note that for a given $w_t\in\cW$ then 
    $g_t: = \phiR{\inn{w_t, x_t} - y_t} \cdot x_t$ is an unbiased estimate for the gradient of $L_R(\cdot)$ at $w_t$. 
    Moreover, since $\phiR{\cdot}$ is bounded by $R$ and the features are bounded, then the norm of $g_t$ is bounded by $R$. Thus, applying projected SGD as is done in \algoref{lipsgd} ensures that (see e.g. \citet[Theorem 14.8]{shalev2012}),
    $$\expc{}{L_R(\bw)} - L_R(\sw) \leq \frac{R D}{\sqrt{T}}~.$$
    Combining this with \lemref{FL} concludes the proof.
\end{proof}
\section{The Strictly Positive Definite Feature Covariance Matrix Case}\label{sect:PD}
Here we assume that $\Sigma\succeq\rho\cdot I$ for some known, strictly positive, $\rho>0$. In contrast to the general case discussed in the previous section, here we show that it is possible to obtain guarantees even if $\Ex\neq 0$. We also establish faster convergence rates compared to the general case.

We first consider the case in which the expectation of the features $\Ex$ is known to the learner, so it can perfectly center the features. We show that feeding these centered samples to an appropriate variant of SGD leads to an accurate estimation of $\sw$ with an error rate of $\bigO{\onicefrac{(1 - \alpha)^2 T}}$ (\thmref{known_main}).

We then show that the same estimation rate can be achieved even if the expectation of the features is unknown to the learner, but rather estimated from data. In~\thmref{unknown_main} we show that this can be done at the same rate as in the known expectation case up to logarithmic factors in $T$.

\subsection{The Known Expectation Case}\label{sect:known}
When the feature vector is not centered, the minimizer of $L_R(\cdot)$ might be different from $\sw$. A natural way to avoid it is to center the features, i.e. $x_i - \Ex$. In fact, \mdlref{simple_model} can be written with centered feature vectors as follows,
\begin{model} \label{mdl:centered_model}
\qquad \qquad \qquad
    $y = \inn{\sw, x - \Ex} + \inn{\sw,\Ex} + \epsilon + b~,$
\end{model}
which is still a linear model albeit with two differences. First, the norm of the centered features might be larger than $1$. To resolve this we recall that the radius parameter of the Huber loss was set to $R= 6 D + \sigma$, and so for any non-corrupted sample in \mdlref{centered_model}
\begin{align*}
    \left| \inn{w,x - \Ex} - y \right| &\leq D \left( \|x\| + \left\|\Ex\right\| \right) + \left| y \right|\\
    &\leq 2 D + D + \sigma
    \leq R~,
\end{align*}
where here, the first inequality follows from the triangle inequality and the second one follows from \asmpref{x}.
Thus, our choice of $R$ ensures that for any $w\in\cW$ and non-corrupted sample $(x_i,y_i)$ we have, $\hubR{\inn{w,x_i- \Ex} - y_i} = \half \left(\inn{w,x_i- \Ex} - y_i\right)^2$ with probability 1.

A second difference in \mdlref{centered_model}, is that it has an additional unknown quantity, to wit $\inn{\sw,\Ex}$.
As we next show, this is inconsequential.
With slight abuse of notation define,
\begin{align}\label{eq:LRCentered}
    L_R(w) = \expc{x,\epsilon,b}{\hubR{\inn{w, x - \Ex} - y}}~.
\end{align}
This expected Huber loss of centered features is also minimized by $\sw$:
\begin{lemma}\label{lem:L_R_sc}
    $L_R(w)$ is $(1 - \alpha) \rho$-strongly convex in $\cW$ and $\sw = \argmin_{w \in \cW} L_R(w)$.
\end{lemma}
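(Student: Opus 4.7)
The plan is to decompose $L_R$ into a ``clean'' part and a ``corrupted'' part via the law of total expectation, and then show that the clean part is already $(1-\alpha)\rho$-strongly convex with minimizer $\sw$, while the corrupted part is merely convex but has vanishing gradient at $\sw$ (thanks to the centering). This mirrors the decomposition used in \lemref{LFH}, but applied to the centered model.

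First, I would write $L_R(w) = (1-\alpha) F_c(w) + \alpha H_c(w)$, where $F_c(w) := \cexpc{x,\epsilon,b}{h_R(\inn{w,x-\Ex}-y)}{b=0}$ and $H_c$ is the conditional expectation given $b\neq 0$. For the clean part, the paper has already shown that the choice $R=6D+\sigma$ ensures $|\inn{w,x-\Ex}-y|\leq R$ for every $w\in\cW$ and every non-corrupted sample, so Huber collapses to the quadratic: $F_c(w)=\cexpc{x,\epsilon}{}{\half(\inn{w,x-\Ex}-y)^2}$. Substituting $y=\inn{\sw,x-\Ex}+\inn{\sw,\Ex}+\epsilon$ (when $b=0$), expanding the square, and using $\expc{}{x-\Ex}=0$ together with the independence of $\epsilon$ from $x$ and $\expc{}{\epsilon}=0$, the cross terms vanish and I get
\begin{equation*}
F_c(w) \;=\; \tfrac{1}{2}(w-\sw)^T \Sigma (w-\sw) \;+\; \tfrac{1}{2}\inn{\sw,\Ex}^2 \;+\; \tfrac{1}{2}\expc{}{\epsilon^2}.
\end{equation*}
Hence $\nabla^2 F_c(w) = \Sigma \succeq \rho I$, so $F_c$ is $\rho$-strongly convex, and $\nabla F_c(\sw)=0$.

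For the corrupted part $H_c$, convexity is immediate as an expectation of the convex map $w\mapsto h_R(\inn{w,x-\Ex}-y)$. For the gradient at $\sw$, I would use $\nabla_w h_R(\inn{w,x-\Ex}-y) = \phiR{\inn{w,x-\Ex}-y}\cdot(x-\Ex)$, evaluate at $w=\sw$ to get the integrand $\phiR{-\inn{\sw,\Ex}-\epsilon-b}\cdot(x-\Ex)$, and then exploit that $x$ is independent of $(\epsilon,b)$. Since the clip $\phiR{\cdot}$ depends only on $(\epsilon,b)$ at $w=\sw$, the expectation factorizes and the factor $\expc{}{x-\Ex}=0$ makes $\nabla H_c(\sw)=0$.

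Combining, $\nabla^2 L_R(w)=(1-\alpha)\nabla^2 F_c(w)+\alpha\nabla^2 H_c(w)\succeq (1-\alpha)\rho I$ (the $H_c$ piece contributing a PSD matrix), giving the $(1-\alpha)\rho$-strong convexity via \propref{sc_scnd}, and $\nabla L_R(\sw)=(1-\alpha)\nabla F_c(\sw)+\alpha\nabla H_c(\sw)=0$ with $\sw$ in the interior of $\cW$ (or at least satisfying the first-order optimality condition), so $\sw=\argmin_{w\in\cW} L_R(w)$. The only mild technical point is that $h_R$ is not $C^2$ at $|s|=R$, so I would either work with the first-order characterization of strong convexity directly, or note that $h_R$ admits an a.e.\ Hessian bounded below by $0$ and pass the argument through via convex combinations; alternatively one can simply verify strong convexity by expanding $L_R(w)$ as above, since the clean part contributes the full quadratic lower bound on its own and the corrupted part is convex.
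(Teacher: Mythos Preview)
Your proposal is correct and follows essentially the same decomposition as the paper: split $L_R=(1-\alpha)F_c+\alpha H_c$ via the law of total expectation, show $F_c$ is quadratic on $\cW$ with Hessian $\Sigma\succeq\rho I$ (since $R$ was chosen so Huber collapses to $\ell_2$ on clean samples), and note $H_c$ is convex as an average of convex functions. The strong-convexity conclusion is then identical to the paper's \lemref{G_R_sc_apnd}.

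The one place you diverge is in establishing that $\sw$ is the minimizer. You argue $\nabla L_R(\sw)=0$ by factoring the expectation at $w=\sw$ (the clipped term depends only on $(\epsilon,b)$, the factor $x-\Ex$ has mean zero). The paper instead applies Jensen's inequality to the inner expectation over $x$: since $h_R$ is convex and $\expc{x}{\inn{w-\sw,x-\Ex}}=0$, one gets $L_R(w)\geq L_R(\sw)$ directly for every $w$. Both arguments exploit exactly the same centering property, but Jensen is slightly cleaner here: it sidesteps your parenthetical worry about whether $\sw$ lies in the interior of $\cW$ (with $\nabla L_R(\sw)=0$ and convexity, $\sw$ is a global minimizer over $\R^d$ anyway, so this worry is ultimately moot), and it avoids the need to justify interchanging gradient and expectation. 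Your technical caveat about $h_R$ not being $C^2$ is handled the same way in both proofs---the Hessian is only computed for the smooth quadratic $F_c$, and $H_c$ contributes via plain convexity, so \propref{sc_scnd} is never applied to the non-smooth piece.
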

\begin{proof}[Proof Sketch]
    In a high level, we show that $L_R(\cdot)$ can be decomposed to a sum of a $(1 - \alpha) \rho$-strongly convex function and a convex function, and so it is a $(1 - \alpha) \rho$-strongly convex function.
    This decomposition is similar to the one in \lemref{LFH}.
    The optimality of $\sw$ is also done similarly to \lemref{LFH}. The full proof can be found in \sectref{known_apnd}, which establishes \lemref{G_R_sc_apnd}, a more general version of this lemma.
\end{proof}

\paragraph{Huber Loss SGD with Centered Features (\algoref{known}):}
\lemref{L_R_sc} is instrumental in achieving fast convergence of the Huber loss, and in turn for the estimation error $\|\bw~-~\sw\|^2$.
As the lemma implies $L_R(\cdot)$ is strongly convex and maintains the same global optimum $\sw$ as the true expected loss $F(\cdot)$. Thus, it is natural to apply an appropriate version of SGD for strongly-convex functions to $L_R(\cdot)$ while using the contaminated data with \emph{centered} features, in order to obtain guarantees to $\|\bw - \sw\|^2$. This is exactly what we do in \algoref{known}. 
Concretely, \algoref{known} utilizes a SGD with $\half$-suffix averaging with the following guarantees:
\begin{lemma}[{\citet[Theorem 5]{rakhlin2012making}}]\label{lem:sgd_5}
    Consider SGD with $\half$-suffix averaging and with step size $\eta_t := \nicefrac{1}{\lambda t}$. Suppose $f$ is $\lambda$-strongly convex, and that $\expc{}{\norm{g_t}^2} \leq G^2$ for all $t$. Then for any $T$, it holds that $\forall w \in \cW$
    $$\expc{}{f(\bw)} - f(w) \leq \frac{9 G^2}{\lambda T}~.$$
\end{lemma}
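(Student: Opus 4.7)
Since this lemma is essentially quoted verbatim from \citet{rakhlin2012making}, the proof is not original to this paper; I outline the plan one would replicate from scratch. The argument has two ingredients: first, establish a $1/t$-rate bound on the iterate squared error $\expc{}{\norm{w_t - \wopt}^2}$, where $\wopt := \argmin_{w \in \cW} f(w)$; second, exploit the $\half$-suffix averaging to cancel the logarithmic factor that otherwise appears in any last-iterate analysis of non-smooth strongly convex SGD.

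I would begin from the standard one-step SGD inequality. Using non-expansiveness of $\proj{\cW}{\cdot}$ and unbiasedness of $g_t$, one obtains
$$\expc{}{\norm{w_{t+1} - \wopt}^2} \leq \expc{}{\norm{w_t - \wopt}^2} - 2 \eta_t \expc{}{\inn{\g f(w_t), w_t - \wopt}} + \eta_t^2 G^2.$$
Strong convexity gives $\inn{\g f(w_t), w_t - \wopt} \geq f(w_t) - f(\wopt) + \frac{\lambda}{2}\norm{w_t - \wopt}^2$. Dropping the nonnegative suboptimality gap and substituting $\eta_t = 1/(\lambda t)$ produces the scalar recursion $a_{t+1} \leq (1 - 2/t)\, a_t + G^2/(\lambda^2 t^2)$, from which a short induction yields $\expc{}{\norm{w_t - \wopt}^2} = \bigO{G^2/(\lambda^2 t)}$. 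Returning to the same inequality but now \emph{retaining} the suboptimality term and rearranging, one arrives at the key per-step bound
$$\expc{}{f(w_t) - f(\wopt)} \leq \tfrac{\lambda(t-1)}{2}\expc{}{\norm{w_t - \wopt}^2} - \tfrac{\lambda t}{2}\expc{}{\norm{w_{t+1} - \wopt}^2} + \tfrac{G^2}{2\lambda t}.$$

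Summing this over $t = T/2+1,\ldots,T$ telescopes the squared-error pieces. The surviving boundary term $\tfrac{\lambda T/2}{2}\expc{}{\norm{w_{T/2+1}-\wopt}^2}$ is $\bigO{G^2/\lambda}$ by the first ingredient, and the harmonic tail $\sum_{t=T/2+1}^{T} 1/t \leq \log 2$ contributes only another constant, so $\sum_{t = T/2+1}^{T}\expc{}{f(w_t)-f(\wopt)} = \bigO{G^2/\lambda}$. Convexity of $f$ lifts this to the $\half$-suffix average via $\expc{}{f(\bw)}-f(\wopt) \leq \tfrac{2}{T}\sum_{t=T/2+1}^{T}\expc{}{f(w_t)-f(\wopt)}$; tracking constants carefully produces the stated $9 G^2/(\lambda T)$, and the extension to arbitrary $w \in \cW$ is immediate from $f(\wopt) \leq f(w)$. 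The main point requiring care---and the reason the statement demands $\half$-suffix rather than uniform averaging---is that the telescoping boundary term is only $\bigO{G^2/\lambda}$ when the starting index is $\Theta(T)$; averaging from $t=1$ would leak a $\log T$, exactly the well-known deficiency the suffix construction is designed to eliminate.
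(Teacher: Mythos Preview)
The paper does not prove this lemma at all; it is cited from \citet{rakhlin2012making} and used as a black box in the proofs of \thmref{known_main} and \thmref{unknown_main}. Your sketch is a faithful reconstruction of the argument in that reference: establish an $O(G^2/(\lambda^2 t))$ iterate-error bound via the standard strongly-convex SGD recursion, then sum the per-step suboptimality inequality over the last $T/2$ iterates so that the telescoping boundary term is $O(G^2/\lambda)$ and the harmonic tail contributes only a constant. One minor slip: from the inequality you wrote, dropping the nonnegative gap $f(w_t)-f(\wopt)$ and plugging $\eta_t=1/(\lambda t)$ gives a coefficient $(1-1/t)$, not $(1-2/t)$; the latter would require the sharper bound $\inn{\g f(w_t), w_t-\wopt}\geq \lambda\norm{w_t-\wopt}^2$ (which also follows from strong convexity plus optimality of $\wopt$). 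Either version suffices for the $O(1/t)$ conclusion, so the outline is sound.
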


\begin{algorithm}[tb]
   \caption{Huber SGD for known expectation}
   \label{algo:known}
\begin{algorithmic}
   \State {\bfseries Input:} $R > 0, \lambda > 0, \cW \subset \R^d, T \in \N_+, \Ex$ 
   \State $w_1 := 0$
   \For{$t=1$ {\bfseries to} $T$}
        \State Draw $(x_t, y_t)$ from (the contaminated) \mdlref{simple_model}
        \State $\eta_t := \onicefrac{\lambda t}$
        \State $g_t := \phiR{\inn{w_t, x_t - \Ex} - y_t} \cdot \left( x_t - \Ex \right)$
        \State $w_{t+1} := \proj{\cW}{w_t - \eta_t g_t}$
   \EndFor
   \State {\bfseries Return:} $\bw := \frac{2}{T} \sum_{t=1 + \nicefrac{T}{2}}^{T} w_t$
\end{algorithmic}
\end{algorithm}

Now, by using \lemref{L_R_sc} we can bound the estimation error of \algoref{known} as follows,
\begin{theorem}\label{thm:known_main}
    Let $\bw$ be the output of \algoref{known} with input $\left(R, (1-\alpha)\rho, \cW, T, \Ex \right)$, then,
    $$
    \expc{}{\| \bw - \sw \|^2} \leq 
    \frac{72 R^2}{((1 - \alpha) \rho)^2 \cdot T}~.
    $$
\end{theorem}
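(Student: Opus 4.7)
The plan is to combine Lemma~\ref{lem:L_R_sc} (which tells us that $L_R(\cdot)$ is $(1-\alpha)\rho$-strongly convex on $\cW$ with global minimizer $\sw$) with the SGD guarantee of \lemref{sgd_5} and the standard strong-convexity inequality from \propref{sc_opt}.

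First I would verify that \algoref{known} is, in fact, a valid instance of $\half$-suffix averaged SGD on $L_R(\cdot)$ with step size $\eta_t = 1/((1-\alpha)\rho \cdot t)$. The only nontrivial check is that $g_t = \phiR{\inn{w_t, x_t - \Ex} - y_t} \cdot (x_t - \Ex)$ is an unbiased estimate of $\nabla L_R(w_t)$. This follows by direct differentiation of \eqnref{LRCentered} and the fact that $\phiR{\cdot}$ is the (a.e.) derivative of $\hubR{\cdot}$, together with independence of $(x_t,y_t)$ from the history that determines $w_t$. Next I would bound $\|g_t\|$: since $|\phiR{\cdot}| \leq R$ and, by \asmpref{x} together with Jensen's inequality, $\|x_t - \Ex\| \leq \|x_t\| + \|\Ex\| \leq 1 + \E\|x\| \leq 2$, we get $\|g_t\| \leq 2R$, hence $\E\|g_t\|^2 \leq 4R^2$.

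With these two ingredients in hand, \lemref{sgd_5} applied with $\lambda = (1-\alpha)\rho$ and $G^2 = 4R^2$, evaluated at the minimizer $w = \sw \in \cW$, yields
\begin{equation*}
    \E[L_R(\bw)] - L_R(\sw) \;\leq\; \frac{9 \cdot 4 R^2}{(1-\alpha)\rho \cdot T} \;=\; \frac{36 R^2}{(1-\alpha)\rho \cdot T}~.
\end{equation*}
Then I would invoke \propref{sc_opt} with $f = L_R$ and $\wopt = \sw$ (valid by \lemref{L_R_sc}), giving $L_R(\bw) - L_R(\sw) \geq \frac{(1-\alpha)\rho}{2}\|\bw - \sw\|^2$. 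Taking expectations and rearranging,
\begin{equation*}
    \E\|\bw - \sw\|^2 \;\leq\; \frac{2}{(1-\alpha)\rho}\bigl(\E[L_R(\bw)] - L_R(\sw)\bigr) \;\leq\; \frac{72 R^2}{((1-\alpha)\rho)^2 \cdot T}~,
\end{equation*}
which is exactly the claimed bound.

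There is no real obstacle here; the substantive work has already been done in \lemref{L_R_sc} (strong convexity plus $\sw$ being the minimizer despite the unknown offset $\inn{\sw,\Ex}$ in \mdlref{centered_model}) and in the choice $R = 6D+\sigma$ that keeps the non-corrupted samples in the quadratic regime. The mildest pitfall is to remember that centering inflates the feature norm by at most a factor of $2$, which changes the constant in \lemref{sgd_5} from $9$ to $36$ and, after applying strong convexity, yields the final constant $72$.
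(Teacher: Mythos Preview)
Your proposal is correct and follows essentially the same route as the paper: verify that $g_t$ is an unbiased gradient of $L_R$ with $\|g_t\|\leq 2R$, apply \lemref{sgd_5} with $G=2R$ and $\lambda=(1-\alpha)\rho$ to get $\E[L_R(\bw)]-L_R(\sw)\leq 36R^2/((1-\alpha)\rho T)$, and then invoke \propref{sc_opt} via \lemref{L_R_sc} to convert this into the stated bound on $\E\|\bw-\sw\|^2$. The paper's proof is identical in structure and constants.
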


\begin{proof}
    Note that the $g_t$ that we employ in \algoref{known} is an unbiased gradient estimate for the Huber loss $\g L_R(w_t)$ (recall the definition in \eqnref{LRCentered}).
    Also, since $\phiR{\cdot}$ is bounded by $R$ and the feature norms are bounded by $1$, then 
    $\|g_t\|\leq 2R~,\forall t$.
    Moreover, \lemref{L_R_sc} implies that $L_R(\cdot)$ is $(1-\alpha)\rho$-strongly-convex with optimum $\sw\in \cW$. 
    
    Now, since \algoref{known} actually applies strongly-convex SGD with $\half$-suffix averaging, then \lemref{sgd_5} with $G~:=~2R$ yields,
    $$
        \expc{}{L_R(\bw)} - L_R(\sw) \leq \frac{36 R^2}{((1 - \alpha)\rho) \cdot T}~.
    $$
    Combining the above with the strong-convexity of $L_R(\cdot)$ while using \propref{sc_opt} establishes the theorem.
\end{proof}

\subsection{The Unknown Expectation Case}\label{sect:unknown}
In this section we present \algoref{unknown} which generalizes what we did in the previous section for to the case where $\Ex$ is unknown. We assume that $2T$ samples are provided to the algorithm. Similarly to the previous section, the learning algorithm is based on centering the features before feeding them to an SGD with $\half$-suffix averaging. The only difference is that in \algoref{unknown} we center the features using the empirical mean based on first $T$ samples,
$\mu~:=~\meanT~z_t~.$

Similarly to the previous section, with this definition, \mdlref{simple_model} can be written as follows,
\begin{model}\label{mdl:uknown_bounded}
\qquad \qquad \qquad 
    $ \qquad y = \inn{\sw, x - \mu} + \inn{\sw,\mu} + \epsilon + b~.$
\end{model}
Our proposed \algoref{unknown} is based on sample splitting, and has two phases: In the first phase, $\mu$ is computed using $T$ samples, and in the second phase, $T$ steps of SGD are performed on fresh samples that are centered using $\mu$. 
Since the samples used in the SGD algorithm are independent of the estimator $\mu$ (which is a function of different samples), the analysis of the SGD algorithm can be made conditionally on $\mu$, without affecting the distribution of the samples.

\begin{algorithm}[tb]
   \caption{Huber SGD for unknown expectation}
   \label{algo:unknown}
\begin{algorithmic}
   \State {\bfseries Input:} $R > 0, \lambda > 0, \cW \subset \R^d, T \in \N_+$
   \State {\bfseries Phase 1: Compute $\mu$}
       \State Draw $T$ samples $\{(z_i, y_i)\}_{i=1}^T$ from (the contaminated) \mdlref{simple_model}, and estimate the mean,
       \State $\mu := \onicefrac{T} \sum_{i=1}^T z_i$
   
   \State {\bfseries Phase 2: SGD with $\half$-suffix averaging given $\mu$}
       \State $w_1 := 0$
       \For{$t=1$ {\bfseries to} $T$}
            \State Draw $(x_t, y_t)$ from (the contaminated) \mdlref{simple_model}
            \State $\eta_t := \nicefrac{1}{\lambda t}$
            \State $g_t := \phiR{\inn{w_t, x_t - \mu} - y_t} \cdot \left( x_t - \mu \right)$
            \State $w_{t+1} := \proj{\cW}{w_t - \eta_t g_t}$
       \EndFor
   \State {\bfseries Return:} $\bw := \frac{2}{T} \sum_{t=1 + \nicefrac{T}{2}}^{T} w_t$
\end{algorithmic}
\end{algorithm}

With slight abuse of notation define
$$L_R(w) = \expc{x,\epsilon,b}{\hubR{\inn{w - \sw, x - \mu} - \inn{\sw, \mu} - \epsilon - b}}~.$$
\begin{lemma}\label{lem:uknown_L_sc}
    $\forall \mu \in \R^d :\ L_R(w)$ is $(1 - \alpha) \rho$-strongly convex in $\cW$.
\end{lemma}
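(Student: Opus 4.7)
The plan is to mirror the decomposition used in \lemref{L_R_sc}, treating the centering vector $\mu$ as a fixed deterministic parameter throughout, which is justified by the sample-splitting design of \algoref{unknown} (Phase 2 is conditionally independent of the data used to form $\mu$ in Phase 1). By the law of total expectation on the corruption indicator I would split
$L_R(w) = (1-\alpha) F_\mu(w) + \alpha H_\mu(w)$,
where $F_\mu(w)$ is the conditional expected Huber loss given $b=0$ and $H_\mu(w)$ is the one given $b \neq 0$. It then suffices to show that $F_\mu$ is $\rho$-strongly convex on $\cW$ and that $H_\mu$ is convex, since their convex combination is automatically $(1-\alpha)\rho$-strongly convex.

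The central step is strong convexity of $F_\mu$. Because $\mu = \meanT z_t$ is an empirical mean of features with $\|z_t\|\le 1$, the triangle inequality yields $\|\mu\|\le 1$. Combining this with \asmpref{w}--\asmpref{x} gives, for any $w \in \cW$,
$|\inn{w-\sw, x-\mu} - \inn{\sw, \mu} - \epsilon| \le 2D(1+\|\mu\|) + D\|\mu\| + \sigma \le 5D+\sigma \le R$
with probability $1$, exactly as in the argument following \mdlref{centered_model}. This places the Huber argument in its quadratic regime for all non-corrupted samples, so
$F_\mu(w) = \half\, \expc{x,\epsilon}{\left(\inn{w-\sw, x-\mu} - \inn{\sw,\mu} - \epsilon\right)^2}$,
which is quadratic in $w$. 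Differentiating twice and using $\expc{}{\epsilon}=0$ gives
$\nabla^2 F_\mu(w) = \expc{}{(x-\mu)(x-\mu)^T} = \Sigma + (\Ex - \mu)(\Ex - \mu)^T \succeq \Sigma \succeq \rho I$,
so \propref{sc_scnd} yields $\rho$-strong convexity.

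For $H_\mu$, convexity is immediate: $\hubR{\cdot}$ is convex, the map $w \mapsto \inn{w-\sw, x-\mu} - \inn{\sw, \mu} - \epsilon - b$ is affine in $w$ for every realization of $(x,\epsilon,b)$, and conditional expectation preserves convexity. Combining the two, $L_R = (1-\alpha) F_\mu + \alpha H_\mu$ is $(1-\alpha)\rho$-strongly convex on $\cW$.

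I do not expect a deep obstacle here: the only substantive difference from the known-expectation case is the extra constant-in-$w$ offset $\inn{\sw, \mu}$ inside the Huber argument, which is harmless because it shifts the argument by a quantity independent of $w$ and so affects neither the Hessian nor the convexity of $H_\mu$. The main bookkeeping point is verifying the quadratic regime for non-corrupted samples when $\mu$ replaces $\Ex$, which the generous choice $R = 6D + \sigma$ was specifically designed to accommodate via the bound $\|\mu\|\le 1$ inherited from the algorithm.
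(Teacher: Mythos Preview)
Your proof is correct and follows essentially the same approach as the paper: the paper appeals to a general lemma (\lemref{G_R_sc_apnd}) with $v=q=\mu$, whose proof is precisely the decomposition $L_R=(1-\alpha)F_\mu+\alpha H_\mu$, the quadratic-regime reduction for $F_\mu$ via $\|\mu\|\le 1$ and $R=6D+\sigma$, and the Hessian identity $\expc{}{(x-\mu)(x-\mu)^T}=\Sigma+(\Ex-\mu)(\Ex-\mu)^T\succeq \rho I$ that you carry out directly. Your observation that the claim really uses $\|\mu\|\le1$ (inherited from the algorithm) rather than arbitrary $\mu\in\R^d$ matches the hypothesis of the paper's general lemma.
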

\begin{proof}
    Because $\mu$ is given, the proof is immediate from the more general \lemref{G_R_sc_apnd} with $v = q = \mu$.
\end{proof}
Similarly to the case of known expectation, \algoref{unknown} applies a strongly-convex variant of SGD to the above defined $L_R(\cdot)$.
Unfortunately, In contrast to the case of known expectation, the optimum of $L_R(\cdot)$ in $\cW$ \emph{is not necessarily $\sw$}.
Nevertheless, we are still able to establish a fast convergence rate for \algoref{unknown} as can be seen below,
\begin{theorem}\label{thm:unknown_main}
    Let $\bw$ be the output of \algoref{unknown} with input $\left(R, (1-\alpha)\rho, \cW, T \right)$, then,
    $$
        \expc{}{\norm{\bw - \sw}^2} \leq \frac{72 R^2 \cdot (2\log T + 1)}{((1 - \alpha) \rho)^2 \cdot T}~.
    $$
\end{theorem}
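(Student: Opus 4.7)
The plan is to condition on the empirical mean $\mu$ computed in Phase 1 and analyze Phase 2 as a projected SGD on the (now deterministic) strongly convex function $L_R$ defined with respect to this $\mu$; by \lemref{uknown_L_sc}, this $L_R$ is $(1-\alpha)\rho$-strongly convex in $\cW$. The key complication relative to \thmref{known_main} is that $\sw$ is no longer the minimizer of $L_R$, so I cannot directly invoke \lemref{sgd_5} together with \propref{sc_opt} against $\sw$; instead, I must account for a small bias determined by $\|\mu - \E x\|$.

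First, I would track $\|w_t - \sw\|^2$ directly rather than $\|w_t - w_\mu^*\|^2$. The standard projected-SGD one-step inequality, using $\|g_t\| \leq R\|x_t - \mu\| \leq 2R$ (since $\|\mu\| \leq \tfrac{1}{T}\sum_i \|z_i\| \leq 1$), gives
\[
    \expc{}{\|w_{t+1} - \sw\|^2 \mid w_t, \mu} \leq \|w_t - \sw\|^2 - 2\eta_t \inn{\nabla L_R(w_t), w_t - \sw} + 4\eta_t^2 R^2.
\]
I would split $\inn{\nabla L_R(w_t), w_t - \sw} = \inn{\nabla L_R(w_t) - \nabla L_R(\sw), w_t - \sw} + \inn{\nabla L_R(\sw), w_t - \sw}$. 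The first inner product is at least $(1-\alpha)\rho\|w_t - \sw\|^2$ by strong convexity, and Cauchy--Schwarz followed by AM--GM on the second absorbs half the strong-convexity gain into a $\tfrac{(1-\alpha)\rho}{2}\|w_t - \sw\|^2$ term, leaving a residual proportional to $\|\nabla L_R(\sw)\|^2/((1-\alpha)\rho)$.

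The crucial step is bounding the bias $\|\nabla L_R(\sw)\|$. At $w = \sw$ the scalar $\phiR{\inn{\sw, x - \mu} - y} = \phiR{-\inn{\sw, \mu} - \epsilon - b}$ is \emph{independent of $x$}, and, by obliviousness of the adversary ($b \perp x$), the expectation factorizes into $\nabla L_R(\sw) = c(\mu)\cdot (\E x - \mu)$ for a scalar with $|c(\mu)| \leq R$. Thus $\|\nabla L_R(\sw)\| \leq R\|\mu - \E x\|$, which vanishes exactly when $\mu = \E x$, recovering the known-mean case.

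Plugging these bounds into the recursion with $\eta_t = 1/((1-\alpha)\rho\, t)$ produces
\[
    \expc{}{\|w_{t+1} - \sw\|^2 \mid \mu} \leq \Bigl(1 - \tfrac{1}{t}\Bigr)\expc{}{\|w_t - \sw\|^2 \mid \mu} + \tfrac{R^2 \|\mu - \E x\|^2}{(1-\alpha)^2 \rho^2\, t} + \tfrac{4R^2}{(1-\alpha)^2 \rho^2\, t^2}.
\]
Multiplying by $t$ and summing telescopically yields the $\log T$ factor from the harmonic tail, giving a per-iterate bound of order $R^2 \log T/((1-\alpha)^2\rho^2 T)$ which transfers to the half-suffix average $\bw$ via Jensen's inequality over the suffix. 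Taking the outer expectation over $\mu$ and using $\expc{}{\|\mu - \E x\|^2} \leq 1/T$ (Phase 1 samples $z_i$ are i.i.d.\ with $\|z_i\| \leq 1$, uncorrupted since the adversary perturbs only $y$) yields the claim. The main obstacle is establishing the factorization $\nabla L_R(\sw) = c(\mu)(\E x - \mu)$: without it the bias would scale with the diameter of $\cW$ instead of with $\|\mu - \E x\|$, and the $1/T$ rate would be lost.
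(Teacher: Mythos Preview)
Your proposal is correct and reaches the same $\tilde O\bigl(R^2/((1-\alpha)^2\rho^2 T)\bigr)$ bound, but by a genuinely different route from the paper. The paper works at the \emph{function-value level}: it introduces an auxiliary loss $\tL_R$ (centered at the true mean $\Ex$ rather than $\mu$), for which $\sw$ \emph{is} the minimizer, and decomposes $\tL_R(\bw)-\tL_R(\sw)$ into $[\tL_R(\bw)-L_R(\bw)]+[L_R(\bw)-L_R(\sw)]$; the first piece is controlled by the $2R$-Lipschitzness of $h_R$ together with a vector-Hoeffding bound on $\|\mu-\Ex\|$, and the second by the black-box suffix-averaged SGD lemma applied to $L_R$. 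You instead track the iterates $\|w_t-\sw\|^2$ directly, and your key device---the factorization $\nabla L_R(\sw)=c(\mu)(\Ex-\mu)$ with $|c(\mu)|\le R$, valid because at $w=\sw$ the clipped residual $\phi_R(-\langle\sw,\mu\rangle-\epsilon-b)$ is independent of $x$---replaces the auxiliary function entirely. You also bound $\E\|\mu-\Ex\|^2$ by the elementary variance identity $\le 1/T$ rather than via Hoeffding plus a conditioning argument. The paper's approach is more modular (it reuses the existing SGD lemma verbatim), whereas yours is more elementary and in fact yields slightly better constants; note also that suffix averaging is not really needed in your argument, since you already control each late iterate. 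One small wording slip: your ``per-iterate bound of order $R^2\log T/((1-\alpha)^2\rho^2 T)$'' holds only after taking the outer expectation over $\mu$; conditionally on $\mu$ there is an additional $O\bigl(R^2\|\mu-\Ex\|^2/((1-\alpha)^2\rho^2)\bigr)$ term that does not decay in $T$ until that outer expectation is taken.
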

\paragraph{Analysis.} Prior to proving  \thmref{unknown_main} we need to introduce some definitions and establish some auxiliary lemmas.

\begin{lemma}\label{lem:huber_lip}
    $h_{R}(\cdot)$ is $2R$-Lipschitz with probability 1.
\end{lemma}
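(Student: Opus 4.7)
The statement is essentially a one-line calculus fact about the Huber function. My plan is to differentiate $h_R$ piecewise, bound the derivative uniformly, and then integrate.

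First I would verify directly from the definition that $h_R$ is differentiable everywhere except at $s = \pm R$, with derivative $h_R'(s) = \phi_R(s)$. Explicitly, on the quadratic piece $|s| \leq R$ we have $h_R'(s) = s$ and hence $|h_R'(s)| \leq R$, while on the linear piece $|s| > R$ we have $h_R'(s) = R \cdot \mathrm{sign}(s)$ and hence $|h_R'(s)| = R$. The two kink points form a Lebesgue-null set, which explains the ``with probability 1'' qualifier: the derivative is well-defined almost everywhere and uniformly bounded by $R$ there (equivalently, every element of the Clarke subdifferential at the kink points has magnitude at most $R$, matching the observation made just after the definition of $\phi_R$ in \sectref{huber}).

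Second, since $h_R$ is continuous and piecewise $C^1$ --- hence absolutely continuous on any interval --- the fundamental theorem of calculus yields, for any $s_1,s_2\in\R$,
\[
|h_R(s_1) - h_R(s_2)| = \left|\int_{s_1}^{s_2} h_R'(t)\, dt\right| \leq R\,|s_1 - s_2|.
\]
Thus $h_R$ is in fact $R$-Lipschitz, which trivially implies the stated $2R$-Lipschitz bound.

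There is no substantive obstacle here; the only minor subtlety is justifying the integration across the non-smooth points $\pm R$, which is handled by absolute continuity of $h_R$ (a piecewise polynomial that matches at the seams). The lemma is stated in this weaker $2R$ form presumably because later applications compose $h_R$ with affine functions involving centered features $x - \mu$, for which the relevant Lipschitz constant naturally carries a factor bounded by $\|x\| + \|\mu\| \leq 2$.
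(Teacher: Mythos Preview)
Your argument is correct, but it proves a slightly different statement than the one the paper has in mind. You read ``$h_R(\cdot)$ is $2R$-Lipschitz with probability $1$'' as a claim about the scalar map $s\mapsto h_R(s)$, and you interpret ``with probability $1$'' as an allusion to the two non-smooth points $\pm R$. The paper, however, means the composed map
\[
w\;\longmapsto\; h_R\bigl(\inn{w-\sw,\,x-\Ex}-\inn{\sw,\mu}-\epsilon-b\bigr),
\]
and proves $2R$-Lipschitzness in $w$ by bounding the gradient norm: $\norm{\phi_R(\cdot)\,(x-\Ex)}\le R\cdot\norm{x-\Ex}\le 2R$. The phrase ``with probability $1$'' refers to \asmpref{x} (the a.s.\ bound $\norm{x}\le1$, hence $\norm{x-\Ex}\le2$), not to the kinks.

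That said, your scalar $R$-Lipschitz bound is exactly what the downstream application (\lemref{tL_LR}, step~$(b)$) actually uses: there the two Huber evaluations differ only in their scalar argument, by $\inn{\bw-\sw,\mu-\Ex}$, so scalar Lipschitzness is the relevant property. Your closing remark already anticipates the paper's route---the factor $2$ enters through the chain rule and $\norm{x-\Ex}\le2$---so the two approaches are equivalent up to where one absorbs that factor. Your version even yields the sharper constant $R$ in the application.
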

\begin{proof} For any $w \in \cW$
    \begin{align*}
        &\norm{\g \hubR{\inn{w - \sw, x - \Ex} - \inn{\sw,\mu} - \epsilon - b}}\\
        &= \norm{\phiR{\inn{w - \sw, x - \Ex} - \inn{\sw,\mu} - \epsilon - b} \cdot \left(x - \Ex\right)}\\
        &\leq 2 R~,
    \end{align*}
    where the equality follows from definition and the inequality follows from the definition of $\phiR{\cdot}$ and \asmpref{x} with probability 1.
\end{proof}

The following is a version of Hoeffding's inequality for random vectors,
\begin{lemma}[{\citet[Theorem 2.1]{Kakade13}}]\label{lem:hoeff_vec}
    Assume that $\{x_i\in\R^d\}_{i=1}^T$ are random variables sampled i.i.d and $\| x_i \| \leq K$ almost surly. Then with probability greater than $1 - \delta$,
    $$\left\| \frac{1}{T}\sum_{i=1}^Tx_i - \Ex \right\| \leq 6 K \sqrtdeltaT~.$$
\end{lemma}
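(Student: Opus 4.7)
\textbf{Proof proposal for \lemref{hoeff_vec}.} The plan is to combine McDiarmid's bounded-differences inequality with a direct second-moment bound on the expectation. Define the real-valued function
$$
f(x_1,\ldots,x_T) := \left\| \ofrac{T} \sum_{i=1}^T x_i - \Ex \right\|~,
$$
viewed as a function of $T$ independent arguments. I would decouple the tail bound into a concentration-around-the-mean step and an expectation bound, and then add the two contributions.

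First, I verify that $f$ has bounded differences with constant $c := \nicefrac{2K}{T}$: replacing any single argument $x_i$ by another value $x_i'$ with $\|x_i'\| \leq K$ shifts $\ofrac{T}\sum_j x_j$ by at most $\nicefrac{2K}{T}$ in $\ell_2$ norm, and the reverse triangle inequality transfers this bound to $f$. McDiarmid's inequality then yields, for every $t > 0$,
$$
\prob{ f \geq \E f + t } \leq \exp\!\left( -\frac{2 t^2}{T\,c^2} \right) = \exp\!\left( -\frac{t^2 T}{2 K^2} \right)~,
$$
so the choice $t = K \sqrt{\nicefrac{2\logdelta}{T}}$ makes the right-hand side at most $\delta$.

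Next I would bound $\E f$ via the second moment. By Jensen's inequality $\E f \leq \sqrt{\E f^2}$, and since the $x_i$ are i.i.d.\ the cross terms in $\E \bigl\|\sum_i (x_i - \Ex)\bigr\|^2$ vanish, yielding
$$
\E f^2 = \ofrac{T^2} \sum_{i=1}^T \E \| x_i - \Ex \|^2 \leq \frac{4 K^2}{T}~,
$$
because $\|x_i - \Ex\| \leq 2K$ almost surely. Combining the two pieces, with probability at least $1 - \delta$,
$$
f \leq \frac{2K}{\sqrt{T}} + K \sqrt{\frac{2 \logdelta}{T}}~,
$$
and an elementary estimate absorbs this into $6 K \sqrtdeltaT$ in the informative regime of the lemma. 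There is no genuine obstacle: the substantive content is McDiarmid plus an i.i.d.\ variance calculation, both textbook; the only mildly delicate point is the final constant-chasing that collapses the two additive terms into a single $\sqrtdeltaT$ factor with clean leading constant.
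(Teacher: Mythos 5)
The paper does not actually prove this lemma---it is imported verbatim from \citet[Theorem 2.1]{Kakade13}---and your McDiarmid-plus-second-moment argument is the standard derivation of exactly such a statement, so your proof is correct and in essentially the same spirit as the cited source. The only caveat is the one you already flag: absorbing $\frac{2K}{\sqrt{T}} + K\sqrt{\nicefrac{2\logdelta}{T}}$ into $6 K \sqrtdeltaT$ requires $\logdelta$ bounded away from $0$ (e.g.\ $\delta \leq \onicefrac{e}$ suffices, since then $2 \leq 2\sqrtlogdelta$), which holds in the paper's only invocation of the lemma, where $\delta = \onicefrac{T}$.
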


The main challenge is that now $\sw$ is not necessarily the optimum of $L_R(\cdot)$ in $\cW$. 
To circumvent this, we will analyze an auxiliary expected loss function which we define below,
$$
    \tL_R(w) := \expc{x,\epsilon,b}{\hubR{\inn{w - \sw, x - \Ex} - \inn{\sw, \mu} - \epsilon - b}}~.
$$
This function is not the one minimized by the algorithm, however, it has the following desirable property:

\begin{lemma}\label{lem:tL_sc}
    $\forall \mu \in \R^d :\ \tL_R(w)$ is $(1 - \alpha) \rho$-strongly convex in $\cW$ and $\sw = \argmin_{w \in \cW} \tL_R(w)$.
\end{lemma}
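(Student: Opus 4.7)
The plan is to mirror the decomposition trick used in \lemref{LFH} and \lemref{L_R_sc}: split $\tL_R(w)$ via the law of total expectation over the events $\{b=0\}$ and $\{b\neq 0\}$, show that the first conditional expectation is $\rho$-strongly convex with minimizer $\sw$, show that the second is convex and also minimized at $\sw$, and combine them with weights $1-\alpha$ and $\alpha$ to conclude $(1-\alpha)\rho$-strong convexity of $\tL_R$ together with $\sw\in\cW$ being its constrained minimizer.

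First I would verify that on $\{b=0\}$, for every $w\in\cW$, the Huber argument $\inn{w-\sw,x-\Ex}-\inn{\sw,\mu}-\epsilon$ lies in the quadratic regime almost surely. Note that $\mu$ in \algoref{unknown} is an empirical mean of features $z_i$ drawn from \mdlref{simple_model}, and since the adversary in that model corrupts only the label $y$, we still have $\|z_i\|\leq 1$ by \asmpref{x}, hence $\|\mu\|\leq 1$. Combined with $\|x-\Ex\|\leq 2$, $\|w-\sw\|\leq 2D$, $\|\sw\|\leq D$ and $|\epsilon|\leq\sigma$, a triangle inequality gives $|\inn{w-\sw,x-\Ex}-\inn{\sw,\mu}-\epsilon|\leq 5D+\sigma\leq R$, so $\hubR{\cdot}$ collapses to $\half(\cdot)^2$. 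Expanding this quadratic and using $\E[x-\Ex]=0$, $\E[\epsilon]=0$, and the independence of $\epsilon$ from $x$, every cross term that involves $w$ vanishes, leaving $\half(w-\sw)^\top\Sigma(w-\sw)$ plus $w$-independent constants. By \asmpref{pd_cov} and \propref{sc_scnd} this piece is $\rho$-strongly convex in $w$ and is clearly minimized at $w=\sw$.

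Next I would handle the conditional expectation on $\{b\neq 0\}$. Convexity in $w$ is immediate since $\hubR{\cdot}$ is convex and is composed with an affine function of $w$. To see that $\sw$ minimizes this piece, I would apply Jensen's inequality with respect to $x$ only (treating $\epsilon,b$ as fixed), using that $x$ is independent of $(\epsilon,b)$, exactly as in step $(a)$ of the proof of \lemref{FL}. The inner expectation replaces $\inn{w-\sw,x-\Ex}$ by $\inn{w-\sw,\E[x-\Ex]}=0$, so the resulting lower bound $\cexpc{\epsilon,b}{\hubR{-\inn{\sw,\mu}-\epsilon-b}}{b\neq 0}$ is $w$-independent and coincides with the value of this conditional expectation at $w=\sw$.

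Finally, forming the convex combination $(1-\alpha)\cdot(\text{first piece})+\alpha\cdot(\text{second piece})$ yields that $\tL_R$ is $(1-\alpha)\rho$-strongly convex in $\cW$ and that $\sw$ is its unconstrained minimizer; since $\sw\in\cW$ by \asmpref{w}, it is also the constrained minimizer. The only non-routine step is the norm check that certifies the quadratic regime on $\{b=0\}$; everything else follows the template already established in \lemref{LFH}, \lemref{FL}, and \lemref{L_R_sc}, so I would expect the bookkeeping to be straightforward once that bound is in hand.
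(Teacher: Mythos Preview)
Your approach is correct and essentially matches the paper's: the paper's proof invokes the general \lemref{G_R_sc_apnd} with $v=\Ex$ and $q=\mu$, whose proof decomposes via the law of total expectation over $b$, uses the quadratic regime on $\{b=0\}$ to obtain $\rho$-strong convexity, convexity on $\{b\neq 0\}$, and Jensen's inequality over $x$ (exploiting $\E[x-\Ex]=0$) to identify $\sw$ as the minimizer---precisely your outline. The only cosmetic differences are that the paper applies Jensen to the full expectation rather than to the $\{b\neq 0\}$ piece alone, and that, like you, it tacitly relies on $\|\mu\|\leq 1$ (as guaranteed in the context of \algoref{unknown}) despite the lemma being stated for all $\mu\in\R^d$.
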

\begin{proof}
    Because $\mu$ is given, the proof is immediate from the more general \lemref{G_R_sc_apnd} with $v = \Ex$ and $q = \mu$.
\end{proof}

Now, we can use \propref{sc_opt} to bound the estimation error: for a given $\mu$,
\begin{align*}
    \frac{(1 - \alpha) \rho}{2} \| \bw - \sw \|^2 &\leq \tL_R(\bw) - \tL_R(\sw)\\
    &= \tL_R(\bw) - L_R(\bw)\\
    &\ \ \ \ + L_R(\bw) - L_R(\sw)\\
    &\ \ \ \ + \underbrace{L_R(\sw) - \tL_R(\sw)}_{=0}~,
\end{align*}
where the last term equals zero by the definitions of $L_R(\sw)$ and $\tL_R(\sw)$. So, By taking expectation we obtain,
\begin{equation}\label{eq:bw_sw}
    C \cdot \expc{}{\| \bw - \sw \|^2} \leq \expc{}{\tL_R(\bw) - L_R(\bw)} + \expc{}{L_R(\bw) - L_R(\sw)}~,
\end{equation}
where $C = \frac{(1 - \alpha) \rho}{2}$. We now bound each of these terms. 

For the second term, since \algoref{unknown} applies SGD to the strongly-convex function $L_R(\cdot)$, then the expectation of $L_R(\bw) - L_R(\sw)$ is upper bounded by $\bigO{\ofrac{T}}$ due to \lemref{sgd_5}, with the parameter choice $F~=~L_R, G~=~R, \lambda~=~(1~-~\alpha)~\rho$.

The first term above can be upper bounded as follows,
\begin{lemma}\label{lem:tL_LR}
    Let  $C = \frac{(1 - \alpha) \rho}{2}$. Then for a given $\mu$,
    $$
        \expc{}{\tL_R(\bw) - L_R(\bw)} \leq \frac{C}{2} \cdot \expc{}{\|\bw - \sw \|^2} + \frac{2 R^2}{C} \cdot \frac{36 \log T + 4}{T}
    $$
\end{lemma}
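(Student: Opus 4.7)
The plan is to relate $\tL_R$ and $L_R$ via the Lipschitzness of the scalar Huber function, decouple the resulting product with Young's inequality, and finish by bounding the mean-estimation error $\expc{}{\|\mu - \Ex\|^2}$ using the vector Hoeffding bound (\lemref{hoeff_vec}).

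First I would observe that inside the two expectations defining $\tL_R(w)$ and $L_R(w)$, the arguments of $h_R(\cdot)$ are identical except that one uses $x - \Ex$ and the other uses $x - \mu$, while the remaining terms $\inn{\sw,\mu}$, $\epsilon$, and $b$ agree realization-by-realization. Hence the two scalar arguments differ by exactly $\inn{w - \sw, \mu - \Ex}$, a quantity that does not depend on $(x,\epsilon,b)$. Since $h_R$ is $R$-Lipschitz on $\R$ (its derivative $\phi_R$ is bounded in magnitude by $R$), applying this pointwise and using Cauchy--Schwarz yields, after taking the expectation over $(x,\epsilon,b)$,
$$|\tL_R(w) - L_R(w)| \le R\cdot|\inn{w-\sw,\mu-\Ex}| \le R\cdot\|w-\sw\|\cdot\|\mu-\Ex\|.$$

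Next, specializing to $w=\bw$ and applying Young's inequality of the form $\kappa ab \le \tfrac{C}{2}a^2 + \tfrac{\kappa^2}{2C}b^2$ (with $a = \|\bw-\sw\|$, $b = \|\mu-\Ex\|$, and $\kappa$ the effective Lipschitz constant obtained in the previous step), then taking expectations over all the randomness, gives
$$\expc{}{\tL_R(\bw) - L_R(\bw)} \le \frac{C}{2}\expc{}{\|\bw-\sw\|^2} + \bigO{\frac{R^2}{C}}\cdot\expc{}{\|\mu-\Ex\|^2}.$$
Crucially, since the oblivious adversary corrupts only the response $y$ via the additive term $b$, the marginal distribution of $x$ is unaffected, so the Phase~1 samples $\{z_i\}$ are genuine i.i.d.\ copies of $x$ with $\|z_i\|\le 1$ by \asmpref{x}. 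Thus \lemref{hoeff_vec} applies with $K=1$ to $\mu$, producing the tail bound $\prob{\|\mu-\Ex\|^2 > u} \le \exp(-Tu/36)$.

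Finally, to convert this high-probability bound into the required second-moment estimate, I would split at $\delta = 1/T$: on the good event (probability at least $1 - 1/T$) use the concentration estimate $\|\mu-\Ex\|^2 \le 36\log T/T$, while on the complementary event use the deterministic bound $\|\mu-\Ex\|^2 \le 4$, which follows from $\|x\|,\|\Ex\|\le 1$. This yields $\expc{}{\|\mu-\Ex\|^2} \le (36\log T + 4)/T$, and substituting this back produces the stated inequality. The most delicate part is this last step: the $\log T$ factor in the final bound arises precisely from the threshold choice $\delta = 1/T$ used to trade off the tail against the deterministic worst case, and getting the stated constants right requires a little bookkeeping there. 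The Lipschitz comparison and Young's step are routine.
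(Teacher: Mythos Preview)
Your proposal is correct and follows essentially the same route as the paper: bound $\tL_R(\bw)-L_R(\bw)$ via the Lipschitzness of $h_R$ and Cauchy--Schwarz, split the product with Young's inequality, and then control $\expc{}{\|\mu-\Ex\|^2}$ by combining \lemref{hoeff_vec} with the trivial bound $\|\mu-\Ex\|\le 2$ at threshold $\delta=1/T$. The only cosmetic difference is that you invoke the scalar $R$-Lipschitzness of $h_R$ directly, whereas the paper cites \lemref{huber_lip} and carries a constant $2R$; your version is slightly tighter but otherwise identical.
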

\begin{proof}
    Take $C = \frac{(1 - \alpha) \rho}{2}$. Then,
    \begin{align}
    \label{eq:Ineq1LemA4}
        \tL_R&(\bw) -  L_R(\bw) \nonumber\\
        &\overset{(a)}{=} \underset{x,\epsilon,b}{\E} \Big[\hubR{\inn{\bw - \sw, x - \Ex} - \inn{\sw, \mu} - \epsilon - b} - \hubR{\inn{\bw - \sw, x - \mu} - \inn{\sw, \mu} - \epsilon - b}\Big]\nonumber\\
        &\overset{(b)}{\leq} 2 R \cdot \abs{\inn{\bw - \sw, \mu - \Ex}}\nonumber\\
        &\overset{(c)}{\leq} C \cdot \frac{2 R}{C} \cdot \norm{\bw - \sw} \cdot \norm{\mu - \Ex}\nonumber\\
        &\overset{(d)}{\leq} \frac{C}{2} \norm{\bw - \sw}^2 + \frac{2 R^2}{C} \norm{\mu - \Ex}^2~,
    \end{align}
    where $(a)$ follows from the definitions of $\tL_R(\cdot)$ and $L_R(\cdot)$, $(b)$ follows from \lemref{huber_lip}, $(c)$ follows from the Cauchy-Schwarz's inequality and $(d)$ follows from Young's inequality: $a \cdot b \leq \half (a^2 + b^2)$ where $a~=~\|\bw - \sw\|$ and $b~=~\frac{2 R}{C}~\cdot~\norm{\mu - \Ex}$.
    
    Note that $\bw$ and $\mu$ are random as they are that depend on  $(z_i,y_i)_{i \in [1,2,\dots,T]}$ and $(x_t,y_t)_{t \in [1,2,\dots,T]}$. By taking an expectation with respect to the $2T$ samples on both sides we have
    $$\expc{}{\tL_R(\bw) - L_R(\bw)} \leq \expc{}{\frac{C}{2} \norm{\bw - \sw}^2 + \frac{2 R^2}{C} \norm{\mu - \Ex}^2}~.$$
    We conclude the proof by bounding $\expc{}{\norm{\mu - \Ex}^2}$. We make use of \lemref{hoeff_vec} by defining $\cB$ as the event for which $\norm{\mu - \Ex} \leq 6 \sqrt{\frac{\logdelta}{T}}$. Then,
    \begin{align*}
        \expc{}{\norm{\mu - \Ex}^2} &\overset{(a)}{=} \expc{z_1,z_2,\ldots,z_T}{\norm{\mu - \Ex}^2}\\
        &\overset{(b)}{\leq} \prob{\cB} \cdot \cexpc{z_1,z_2,\ldots,z_T}{\norm{\mu - \Ex}^2}{\cB} + \prob{\cB^c} \cdot \cexpc{z_1,z_2,\ldots,z_T}{\norm{\mu - \Ex}^2}{\cB^c}\\
        &\overset{(c)}{\leq} (1 - \delta) \cdot \frac{36 \logdelta}{T} + \delta \cdot 4\\
        &\overset{(d)}{\leq} \frac{36 \log T + 4}{T}~,
    \end{align*}
    where $(a)$ follows from the i.i.d assumption on the features, $(b)$ follows from the law of total expectation, $(c)$ follows from the definition of $\cB$ and \lemref{hoeff_vec} (which is true for every $\delta \in (0, 1)$) and a naive upper bound of $\norm{\mu - \Ex} \leq 2$ with probability 1 (which follows from \asmpref{x}) and $(d)$ follows from taking $\delta = \ofrac{T}$. Plugging the above into \eqnref{Ineq1LemA4} concludes the proof.
\end{proof}
We are now ready to prove \thmref{unknown_main}.
\begin{proof}[Proof of \thmref{unknown_main}]
    By plugging \lemref{sgd_5} and \lemref{tL_LR} back into \eqnref{bw_sw}, we obtain,
    $$
        C \cdot \expc{}{\norm{\bw - \sw}^2} \leq
        \frac{9 R^2}{(1 - \alpha) \rho T} + \frac{C}{2} \cdot \expc{}{\norm{\bw - \sw}^2} + \frac{2 R^2}{C} \cdot \frac{36 \log T + 4}{T}~.
    $$
    Recalling $C = \frac{(1 - \alpha) \rho}{2}$, the above implies, 
    $$
        \expc{}{\norm{\bw - \sw}^2} \leq \frac{72 R^2 \cdot (2\log T + 1)}{((1 - \alpha) \rho)^2 \cdot T}~.
    $$
    which establishes the theorem.
\end{proof}
\section{Extensions}\label{sect:extensions}

Here we present two extensions for the case where $\Sigma\succeq \rho \cdot I$:
In \sectref{SubGaussian} we go beyond the assumptions of bounded features \& noise, and extend our results to the sub-Gaussian case. In \sectref{adaptivity}, we point out to an algorithm that does not require knowledge of the fraction of contaminated samples $\alpha$, but rather implicitly adapts to it. 
This improves over Algorithms~\ref{algo:known} \& \ref{algo:unknown} which require $\alpha$.

\subsection{Sub-Gaussian noise and feature vectors}\label{sect:SubGaussian}
We next relax the assumptions that the noise and the norm of the feature vectors are bounded with probability $1$, and replace them with the following sub-Gaussian assumptions.
\begin{assumption}[Sub-Gaussian Feature Vector]\label{asmp:x_subG}
    The feature vector $x$ is sub-Gaussian with variance proxy $\kappa^2$, that is $\prob{\norm{x}>u}\leq2e^{-\frac{u^2}{2\kappa^2}}$ for all $u\in\R$.
\end{assumption}
\begin{assumption}[Sub-Gaussian Noise]\label{asmp:epsilon_subG}
    The noise $\epsilon$ is sub-Gaussian with variance proxy $\sigma^2$, that is $\prob{|\epsilon|>u}\leq2e^{-\frac{u^2}{2\sigma^2}}$ for all $u\in\R$.
\end{assumption}
\asmpref{x_subG} replaces \asmpref{x} and \asmpref{epsilon_subG} replaces \asmpref{epsilon}.
We show the following,
\begin{theorem} \label{thm:SGD for sub-Guassian}
    Under Assumptions \ref{asmp:x_subG} \& \ref{asmp:epsilon_subG}, define, $R~=~C\cdot(\kappa~D~+~\sigma)~\sqrt{\log~T}$, where $C>0$ is an explicit constant, which depends logarithmically on $\kappa,\rho,\sigma$.
    Denote $\bw$ as the output of \algoref{unknown} with input $\left(R, (1-\alpha)\rho, \cW, T \right)$, then, 
    $$
        \expc{}{\norm{\bw - \sw}^2} \leq \left(\frac{D^2}{14} + \frac{288 R^2 \cdot (2\log T + 1)}{((1 - \alpha) \rho)^2}\right) \cdot \ofrac{T}~.
    $$
\end{theorem}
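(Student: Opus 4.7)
The plan is to reduce the sub-Gaussian setting to the bounded setting analyzed in \thmref{unknown_main} via a clean-event argument over the $2T$ samples used by \algoref{unknown}. Set $B_x=c_1\kappa\sqrt{\log T}$ and $B_\epsilon=c_2\sigma\sqrt{\log T}$ for explicit constants $c_1,c_2$, and define the clean event
$$\cF:=\bigcap_{t=1}^{2T}\bigl\{\norm{x_t}\leq B_x\text{ and }|\epsilon_t|\leq B_\epsilon\bigr\}.$$
Using Assumptions \ref{asmp:x_subG}--\ref{asmp:epsilon_subG} and a union bound over the $2T$ samples of phases 1 and 2, $c_1$ and $c_2$ can be chosen so that $\prob{\cF^c}\leq 1/(56T)$. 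On $\cF$, repeating the triangle-inequality calculation that produced $R=6D+\sigma$ in \sectref{PD} now yields $R=6DB_x+B_\epsilon$, which matches the stated $R=C(\kappa D+\sigma)\sqrt{\log T}$; in particular every non-corrupted sample lies in the quadratic regime of $h_R$ on $\cF$.

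I would then split the estimation error according to $\cF$,
$$\expc{}{\norm{\bw-\sw}^2}=\expc{}{\norm{\bw-\sw}^2\mathbbm{1}_{\cF}}+\expc{}{\norm{\bw-\sw}^2\mathbbm{1}_{\cF^c}},$$
and dispatch the two terms separately. For the $\cF^c$ term, since both $\bw$ and $\sw$ lie in $\cW$ we have the deterministic bound $\norm{\bw-\sw}\leq 2D$, so this term is at most $4D^2\prob{\cF^c}\leq D^2/(14T)$, matching the first summand of the theorem. For the $\cF$ term I would essentially replay the proof of \thmref{unknown_main} with the bounded-regime constants inflated by $B_x, B_\epsilon$: \lemref{G_R_sc_apnd} (hence \lemref{L_R_sc} and \lemref{tL_sc}) still yields $(1-\alpha)\rho$-strong convexity of $\tL_R(\cdot)$ with minimizer $\sw$; the Young-inequality step culminating in \eqnref{Ineq1LemA4} is deterministic and carries over verbatim; \lemref{sgd_5} applies on $\cF$ with $G^2\asymp R^2 B_x^2$; and $\expc{}{\norm{\mu-\Ex}^2}$ remains $O(\log T/T)$ via a sub-Gaussian analogue of \lemref{hoeff_vec}.

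The main technical obstacle is that conditioning on $\cF$ distorts the sample distribution, whereas the SGD regret bound in \lemref{sgd_5} assumes i.i.d.\ draws from the original law. The cleanest workaround is to keep all expectations unconditional and use $\cF$ only as a \emph{pointwise} support: on sample paths in $\cF$ the Huber loss is quadratic on non-corrupted samples (so strong convexity and the gradient-size control hold pointwise), and on the negligible-probability event $\cF^c$ the indicator $\mathbbm{1}_{\cF^c}$ combined with the $2D$ diameter of $\cW$ absorbs the damage into a $D^2/T$ term. All the log factors arising from $B_x^2,B_\epsilon^2,R^2$ and from the sub-Gaussian concentration of $\mu$ can be absorbed into the leading constant $C$ in the definition of $R$, yielding exactly the statement of the theorem.
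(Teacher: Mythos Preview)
Your proposal contains a genuine gap at exactly the point you flag as ``the main technical obstacle.'' The workaround you suggest---keeping the expectations unconditional and using $\cF$ only ``pointwise''---does not actually work. Strong convexity is a property of the \emph{expected} loss $L_R(w)=\expc{}{\hubR{\cdot}}$ (respectively $\tL_R$), not of a sample path; on any single realization the Huber loss $w\mapsto\hubR{\inn{w,x_t-\mu}-y_t}$ has rank-one Hessian and is never strongly convex in $\R^d$. Your appeal to \lemref{G_R_sc_apnd} for the unconditional sub-Gaussian law is therefore invalid: step $(a)$ of that proof requires every non-corrupted sample to fall in the quadratic regime of $h_R$, which holds only on $\cF$, so under the unconditional law the Hessian of the ``clean'' part is $\expc{}{\mathbb{I}\{|\cdot|\le R\}(x-v)(x-v)^T}\neq\Sigma+(\Ex-v)(\Ex-v)^T$.

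The paper resolves this by actually conditioning on the clean event. Two points you should note. First, your worry about losing i.i.d.\ structure is unfounded: since the clean event is a product event $\bigcap_t\{\|x_t\|\le B_x,\ |\epsilon_t|\le B_\epsilon\}$, the conditional samples are still i.i.d., just from a tilted single-sample law, so \lemref{sgd_5} applies verbatim to the conditional model. Second---and this is the missing technical ingredient---one must show that this tilted law still has a strictly positive definite feature covariance. The paper proves a separate lemma establishing that for any event $\cG$ with $\prob{\cG}\le\delta$ the conditional covariance satisfies $\tilde\Sigma\succeq(\rho-105\kappa^2\sqrt{\delta})I$; choosing $\delta=O(\rho^2/(\kappa^4 T^2))$ then gives $\tilde\Sigma\succeq\tfrac{\rho}{2}I$. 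This halving of $\rho$ is precisely the source of the constant $288=4\cdot 72$ in the theorem statement, which your sketch (claiming the full $(1-\alpha)\rho$) would not reproduce.

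A minor remark: your bad-event bound $4D^2\,\prob{\cF^c}\le D^2/(14T)$ with $\prob{\cF^c}=O(1/T)$ is simpler than the paper's, which takes $\prob{\cF}=O(1/T^2)$ and uses Cauchy--Schwarz to get $4D^2\sqrt{\prob{\cF}}$. Either route gives the $D^2/(14T)$ term; the paper's smaller $\delta$ is dictated by the eigenvalue perturbation lemma above, not by the bad-event term.
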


\begin{proof}[Proof sketch]
    We denote by $\cF^c$ the event in which both the features and the noise values are bounded by some appropriately chosen constants $u_1,u_2$, that is
    $$
        \cF^c := \left\{ \underset{i \in [1,2,\dots,T]}{\bigcap} \|x_i\| \leq u_1 \cap |\epsilon_i| \leq u_2\right\}~.
    $$
    The constants $u_1, u_2$ are chosen such that $\prob{\cF}~\leq~\delta~:=~\min\left\{\half, \frac{\rho^{2}}{105^{2}\kappa^{4}}\cdot\ofrac{T^{2}}\right\}$. Then, by the law of total expectation to decompose the expected estimation error:
    \begin{align*}
        \expc{}{\norm{\bw - \sw}^2} &= \prob{\cF} \cdot \cexpc{}{\norm{\bw - \sw}^2}{\cF}\\
        &\ \ \ \ +\prob{\cF^c} \cdot \cexpc{}{\norm{\bw - \sw}^2}{\cF^c}~,
    \end{align*}
    The first term (conditioned on $\cF$) is bounded by Cauchy-Schwarz's inequality and a naive bound of $\norm{\bw - \sw} \leq 2D$. The second term, which conditions on $\cF^c$, and thus implies that the feature vector and the noise are bounded, can in principle be bounded using the analysis in the bounded setting. The main technical part is that it need to be established that the covariance matrix of the feature vector is still strictly convex, even after conditioning on the event $\cF^c$. To this end, \lemref{modifed feature covariance matrix minimal eigenvalue_apnd} shows that for any event $\cG$ such that $\prob{\cG}\leq\delta$ for some $\delta\in(0,\ofrac{2})$ it holds that
    \begin{align*}
        \tilde\Sigma&:=\cexpc{}{\left(x - \cexpc{}{x}{\cG^c} \right)\left(x - \cexpc{}{x}{\cG^c} \right)^T}{\cG^c}\\
        &\succeq \left(\rho - 105\kappa^2\sqrt{\delta}\right)\cdot I~.
    \end{align*}
    By choosing a sufficiently small $\delta$ we assure that $\tilde\Sigma~\succeq~\frac{\rho}{2} \cdot I$. Then, conditioned on $\cF^c$, and with a sufficiently large radius $R$ for the Huber loss, the analysis of \sectref{unknown} is in tact, and the convergence rate that was stated in \thmref{unknown_main} holds for this conditional expectation. Summing the bounds of both terms then results with the bound on the squared error. The full proof can be found in \thmref{SGD for sub-Guassian_apnd}.
\end{proof}

\subsection{Adaptivity to Contamination Fraction $\alpha$}
\label{sect:adaptivity}

In \sectref{PD} we have assumed that $\alpha$ is known in order to find a good estimation of $\sw$. Our derivation shows that the expected Huber loss $L_R(\cdot)$ is $(1-\alpha)\rho$-strongly-convex, and we encode this information into the learning rate of SGD with $\half$-suffix averaging that we employ as a part of Algorithms~\ref{algo:known} \& \ref{algo:unknown}, as these algorithms require the strong-convexity parameter in order to ensure fast convergence.

In practice, it is unrealistic to assume that the fraction of contamination $\alpha$ is known.
Fortunately, \citet{Cutkosky2018BlackBoxRF} have recently presented a novel and practical first order algorithm for stochastic convex optimization that enables to implicitly adapt to the strong-convexity of the problem at hand. So, if we apply this algorithm instead of SGD with $\half$-suffix averaging, then we immediately obtain adaptivity to both $\alpha$ and $\rho$.
The full description of the algorithm appears in Section $6$ of~\citet{Cutkosky2018BlackBoxRF}. 
\section{Experiments}\label{sect:experiments}

\begin{figure}[t!]
\centering
\begin{subfigure}[0]{.5\textwidth}
  \centering
  \includegraphics[width=\linewidth]{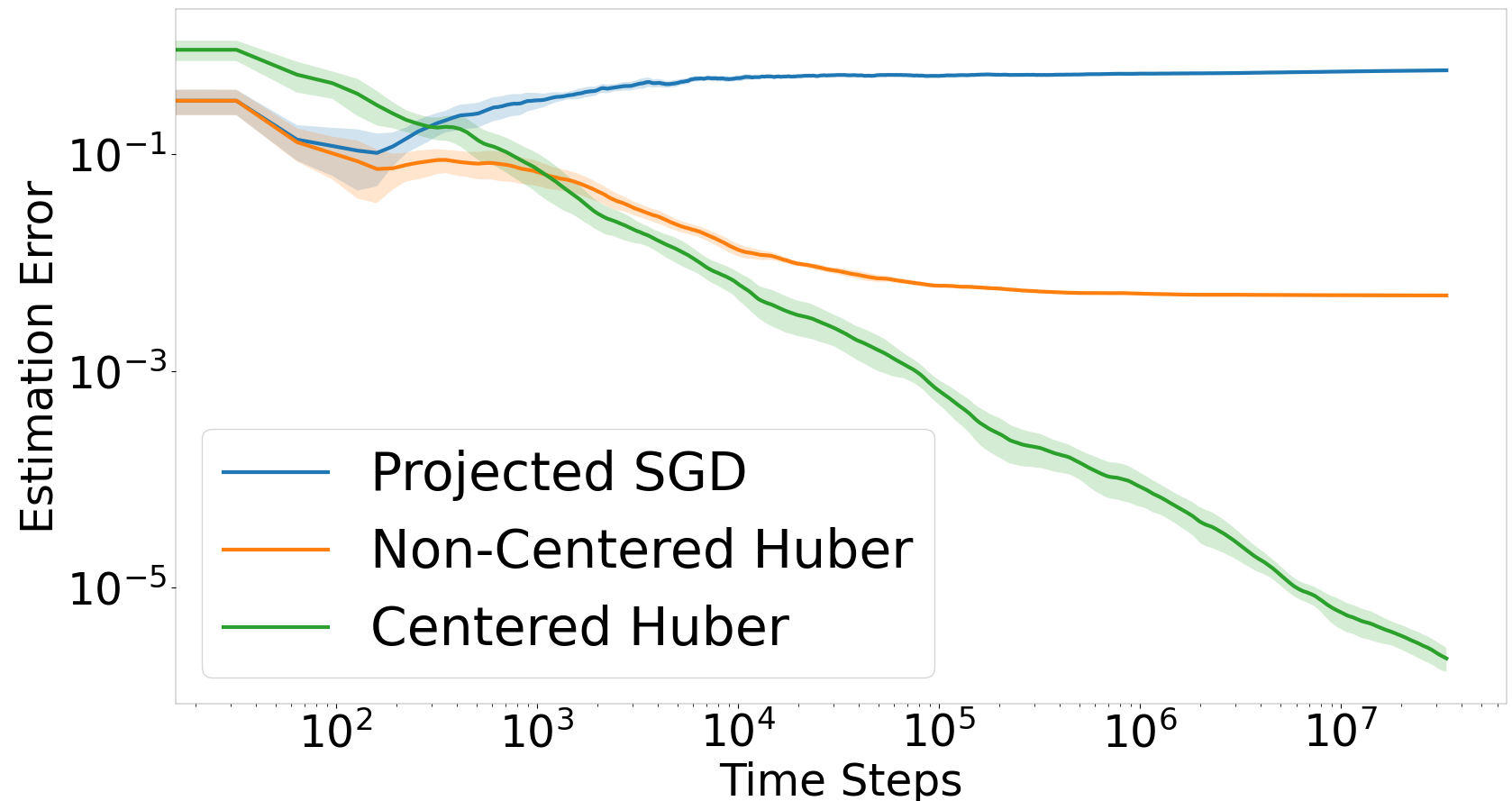}
  \caption{Results for $\alpha = 0.01$.}
  \label{expr2}
\end{subfigure}%
\begin{subfigure}[1]{.5\textwidth}
  \centering
  \includegraphics[width=\linewidth]{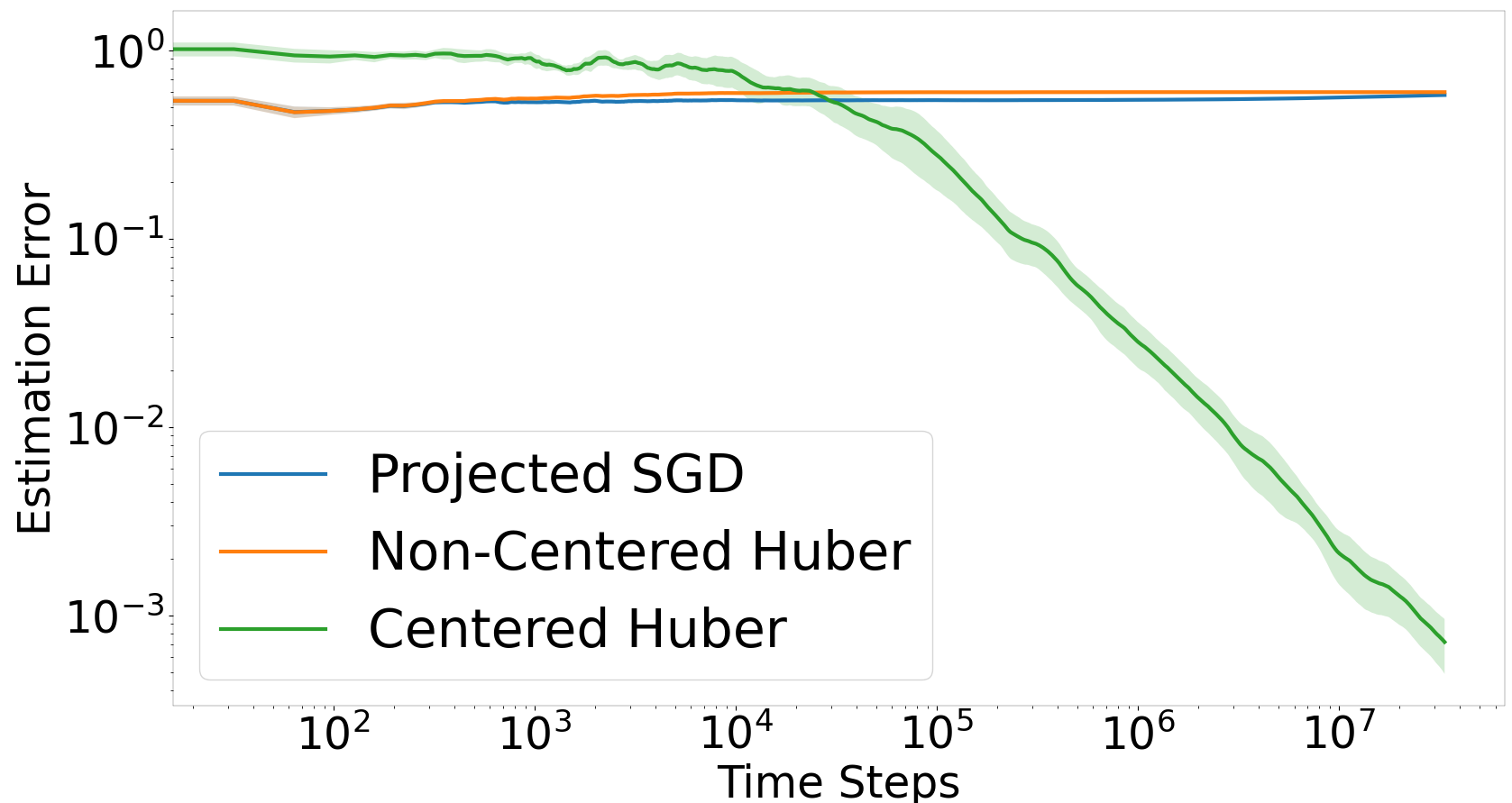}
  \caption{Results for $\alpha = 0.7$.}
  \label{expr1}
\end{subfigure}
\caption{}
\label{fig:exprs}
\end{figure}

We perform 2 experiments under the assumptions of \mdlref{uknown_bounded} in $d=5$ dimensions. 
The optimum $\sw$ is sampled from the unit ball and is the same for both experiments. We also let $\epsilon~\sim~\text{Uniform}[-0.1,0.1]$, and
$x~\sim~\text{Uniform}\left[-1/\sqrt{d}, 0\right]^d$ such that $\|x\| \leq 1$ w.p.~$1$.1 The adversary picks
$b=10^5$ with probability $\alpha$, and $0$ with probability $1-\alpha$. We test two cases $\alpha \in \{0.01,0.7\}$.

We compare $3$ algorithms: \textbf{(i)} Projected SGD over $\ell_2$ loss, \textbf{(ii)} Non-Centered Huber, which is Huber SGD without centering, and \textbf{(iii)} Centered Huber (\algoref{unknown}). 
All methods use the same learning rate $\eta_t: = \eta_0/t$ where $\eta_0 = 1/(1-\alpha)\rho$. Also all methods use the same number of samples.
In Centered Huber we compute the estimate of feature means on the fly rather than using extra samples (i.e., we use $\mu_t: = \onicefrac{t}\sum_{i=1}^{t-1} x_t$ instead of $\mu =\onicefrac{T}\sum_{t=1}^T z_t$). We repeat each experiment $15$ times and add confidence intervals. The experiments, shown in Figure~\ref{fig:exprs}, clearly demonstrate the benefit of our approach compared to the baselines.

\section{Conclusion}
In this paper, we have analyzed robust linear regression under general assumptions on the feature vectors, noise and the oblivious adversary. We have shown that low prediction error requires either centered features or strict positivity of the features vector covariance matrix, and provided efficient SGD-style algorithms and established error rate convergence bounds. Finally, we have provided SGD variants that do not require prior knowledge on the fraction of contamination. While the linear regression model is both basic and important, an important avenue for future work is to generalize the algorithms and the error bounds to more elaborated ML models. 

\subsection*{Acknowledgements}
The work on this paper was supported in part by the Israel Science Foundation (grant No. 447/20).

\newpage
\bibliography{bib}
\bibliographystyle{abbrvnat}

\newpage
\appendix
\section{Generalization of \lemref{L_R_sc}, \lemref{uknown_L_sc} and \lemref{tL_sc}}\label{sect:known_apnd}

\begin{lemma}\label{lem:G_R_sc_apnd}
    Given $v,q\in\R^d$ such that $\|v\|,\|q\| \leq 1$ and $R=6D + \sigma$, let
    $$G_R(w) := \expc{x,\epsilon,b}{\hubR{\inn{w - \sw, x - v} - \inn{\sw,q} - \epsilon,b}}~.$$
    Then, $G_R$ is $(1 - \alpha) \rho$-strongly convex.
    Furthermore, if $v = \Ex$ we have $\sw = \argmin_{w\in\cW} G_R(w)$.
\end{lemma}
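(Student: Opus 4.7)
The plan is to mirror the decomposition used in \lemref{LFH}, splitting $G_R$ via the law of total expectation into a clean part (conditioned on $b=0$) and a corrupted part (conditioned on $b \neq 0$), and then argue that the clean part contributes $(1-\alpha)\rho$-strong convexity while the corrupted part contributes ordinary convexity. The first step is the standard bookkeeping that our choice $R = 6D + \sigma$ keeps the Huber argument in the quadratic regime on clean samples: since $\|w-\sw\| \leq 2D$, $\|x-v\| \leq 2$, $|\inn{\sw,q}| \leq D$, and $|\epsilon| \leq \sigma$, the triangle inequality gives $|\inn{w-\sw,x-v} - \inn{\sw,q} - \epsilon| \leq 4D + D + \sigma \leq R$. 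Thus conditional on $b=0$, $\hubR{\cdot}$ equals $\half(\cdot)^2$ almost surely.

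Next I would compute the Hessian of the clean part, $Q(w) := \cexpc{x,\epsilon,b}{\half(\inn{w-\sw,x-v} - \inn{\sw,q} - \epsilon)^2}{b=0}$. Since $b$ is independent of $(x,\epsilon)$, conditioning drops out and $\g^2 Q(w) = \expc{x}{(x-v)(x-v)^T}$. Using the identity $\expc{}{(x-v)(x-v)^T} = \Sigma + (\Ex - v)(\Ex - v)^T$, this Hessian dominates $\Sigma \succeq \rho I$. Hence $Q(\cdot)$ is $\rho$-strongly convex by \propref{sc_scnd}. The corrupted part $H(w) := \cexpc{x,\epsilon,b}{\hubR{\inn{w-\sw,x-v} - \inn{\sw,q} - \epsilon - b}}{b \neq 0}$ is convex in $w$ as the expectation of a convex function of an affine map. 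Writing $G_R = (1-\alpha) Q + \alpha H$ then gives $(1-\alpha)\rho$-strong convexity on $\cW$, establishing the first claim.

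For the optimality assertion when $v = \Ex$, I would show both summands are minimized at $\sw$ and invoke the fact that $\sw \in \cW$. For the clean part, expanding the gradient of $Q$ at $w=\sw$ yields $\expc{x,\epsilon}{(-\inn{\sw,q} - \epsilon)(x - \Ex)} = -\inn{\sw,q}\expc{x}{x-\Ex} - \expc{\epsilon}{\epsilon}\expc{x}{x-\Ex} = 0$, so $\sw$ is a critical point of the strongly convex $Q$ and hence its unique global minimizer. For the corrupted part, I would apply Jensen's inequality to the convex function $\hubR{\cdot}$ together with independence of $x$ from $(\epsilon,b)$, exactly as in the proof of \lemref{FL}:
\begin{align*}
H(w) &\geq \cexpc{\epsilon,b}{\hubR{\inn{w-\sw, \Ex - v} - \inn{\sw,q} - \epsilon - b}}{b \neq 0}\\
&= \cexpc{\epsilon,b}{\hubR{-\inn{\sw,q} - \epsilon - b}}{b \neq 0} = H(\sw),
\end{align*}
where the middle equality uses $v = \Ex$. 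Summing the two inequalities $Q(w) \geq Q(\sw)$ and $H(w) \geq H(\sw)$ with weights $1-\alpha$ and $\alpha$ yields $G_R(w) \geq G_R(\sw)$ for all $w \in \cW$, concluding the second claim.

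The main obstacle is essentially cosmetic: verifying that the argument of the Huber loss on clean samples still lies in $[-R,R]$ despite the extra $\inn{\sw,q}$ term and the possibly non-unit norm of $x - v$. Once this is in place, the decomposition argument is routine, and the Jensen step for $H$ is identical to the one already used in \lemref{FL}, with $q$ playing the role of a harmless constant shift that cancels on both sides of the inequality $H(w) \geq H(\sw)$.
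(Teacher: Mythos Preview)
Your proposal is correct and follows essentially the same approach as the paper: the decomposition $G_R = (1-\alpha)Q + \alpha H$, the Hessian computation $\expc{}{(x-v)(x-v)^T} = \Sigma + (\Ex - v)(\Ex - v)^T \succeq \rho I$, and the Jensen argument when $v = \Ex$ all match. The only cosmetic difference is that for the optimality claim the paper applies Jensen directly to the full $G_R$ to get $G_R(w) \geq G_R(\sw)$, whereas you handle $Q$ and $H$ separately (gradient vanishing for $Q$, Jensen for $H$) and then sum; both routes are valid and rest on the same ideas.
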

\begin{proof}
    We show that $G_R(w)$ is a sum of a $(1 - \alpha) \rho$-strongly function and a convex function and as such it is a $(1 - \alpha) \rho$-strongly convex.
    
    Define
    $$H(w) := \cexpc{x,\epsilon,b}{\hubR{\inn{w - \sw, x - v} - \inn{\sw, q} - \epsilon -b}}{b \neq 0}~,$$
    which is convex as an average of convex functions. Also define
    \begin{align*}
        F(w) &:= \cexpc{x,\epsilon,b}{\hubR{\inn{w - \sw, x - v} - \inn{\sw, q} - \epsilon - b}}{b = 0}\\
        &\overset{(a)}{=} \cexpc{x,\epsilon,b}{\half \left( \inn{w - \sw, x - v} - \inn{\sw, q} - \epsilon \right)^2}{b = 0}\\
        &\overset{(b)}{=} \expc{x,\epsilon}{\half \left( \inn{w - \sw, x - v} - \inn{\sw, q} - \epsilon \right)^2}\\
        &= \expc{x,\epsilon}{\half \inn{w - \sw, x - v}^2}\\
        &\ \ \ \ + \expc{x,\epsilon}{\epsilon \cdot \inn{w - \sw, x - v} + \inn{\sw, q} \cdot \inn{w - \sw, x - v}} + S~,
    \end{align*}
    where $S$ is a constant independent of $w$. $(a)$ follows from the boundness assumptions on $x, v, q, \epsilon$ and $R = 6D + \sigma$ and $(b)$ follows from the assumption that $x,\epsilon,b$ are statistically independent.
    
    $F(w)$ is a polynomial and hence twice continuously differentiable. We take the second derivative and show it is positive definite,
    \begin{align*}
        \g^2 F(w) &= \expc{x}{(x - v)(x - v)^T}\\
        &= \expc{x}{(x - \Ex + \Ex - v)(x - \Ex + \Ex - v)^T}\\
        &= \expc{x}{(x - \Ex)(x - \Ex)^T + (\Ex - v)(\Ex - v)^T}\\
        &\ \ \ \ + \underbrace{\expc{x}{(x - \Ex)(\Ex - v)^T + (\Ex - v)(x - \Ex)^T}}_{=0}\\
        &= \Sigma + (\Ex - v)(\Ex - v)^T\\
        &\succeq \Sigma
    \end{align*}
    last equality follows from the definition of $\Sigma$ and the fact that $v$ is some given vector. First inequality follows from the fact that $(\Ex - \mu)(\Ex - \mu)^T$ is positive semi-definite by definition.
    
    So, \asmpref{x} with $\rho>0$ and \propref{sc_scnd} assure that $F$ is a $\rho$-strongly convex function.
    Then, by the law of total expectation with respect to $b$:
    $$G_R(w) = (1 - \alpha) F(w) + \alpha H(w)~.$$
    Since $(1 - \alpha) F(w)$ is $(1 - \alpha) \rho$-strongly convex, so is $G_R(w)$.
    
    If $v=\Ex$ we can also show that $\sw = \argmin_{w\in\cW} G_R(w)$. For any $w \in \cW$
    \begin{align*}
        G_R(w) &:= \expc{x,\epsilon,b}{\hubR{\inn{w - \sw, x - \Ex} - \inn{\sw, q} - \epsilon - b}}\\
        &\overset{(a)}{\geq} \expc{\epsilon,b}{\hubR{\expc{x}{\inn{w - \sw, x - \Ex} - \inn{\sw, q} - \epsilon - b}}}\\
        &\overset{(b)}{=} \expc{\epsilon,b}{\hubR{\inn{w - \sw, \expc{x}{x - \Ex}} - \inn{\sw, q} - \epsilon - b}}\\
        &= \expc{\epsilon,b}{\hubR{- \inn{\sw, q} - \epsilon - b}}\\
        &= \expc{\epsilon,b}{\hubR{\inn{\sw - \sw, x - \Ex} - \inn{\sw, q} - \epsilon - b}}\\
        &= G_R(\sw)~,
    \end{align*}
    where $(a)$ follows from convexity of the Huber loss and Jensen's inequality and $(b)$ follows from the linearity of the inner product.
    Moreover, $\sw$ is the unique minimizer of $G_R(w)$ in $\cW$. This is because according to \propref{sc_opt}, if $G_R(w)=G_R(\sw)$ for some  $w\in\cW$ then
    $$0 = G_R(\sw) - G_R(w) \geq \frac{(1 - \alpha) \rho}{2} \|w - \sw\|^2~.$$
    Since $\frac{(1 - \alpha) \rho}{2} > 0$ is assumed this implies $\wopt = \sw$.
\end{proof}

\newpage
\section{Proofs for \sectref{SubGaussian}}
\label{apnd:SubGaussian}

\begin{remark}
We have stated the sub-Gaussian assumptions (\asmpref{x_subG} and \asmpref{epsilon_subG}) in terms of the tails of the probability density functions. We refer the reader to \citet[Chapter 2]{vershynin2018highdim}, for equivalent definitions of sub-Gaussian variables in terms of the moment generating function or in terms of integer moments (all these definitions are essentially equivalent). 
\end{remark}

We will use the following bounds on the moments of sub-Gaussian random variables.
\begin{lemma}\label{lem:moments of subgaussian}
Let $z$ be a sub-Gaussian random variable with variance proxy $\lambda^2$, that is $\prob{|z|>u}\leq2e^{-\frac{u^2}{2\lambda^2}}$ for all $u\in\R$. Then, for any $p\geq 1$
$$
    \expc{}{|z|^p} \leq \sqrt{2\pi\lambda^2}\cdot p\cdot\lambda^p \frac{2^{p/2}\Gamma(\frac{p+1}{2})}{\sqrt{\pi}}
$$
where $\Gamma(\cdot)$ is the Gamma function. Specifically, $\expc{}{|z|}\leq \sqrt{2\pi}\lambda$, $\expc{}{|z|^2}\leq 4\lambda^2$, and $\expc{}{|z|^4}\leq 24\lambda^4$.
\end{lemma}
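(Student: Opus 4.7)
The proof is a standard exercise that converts the sub-Gaussian tail hypothesis into moment bounds via the \emph{layer-cake} representation of the expectation. The plan is to begin from the identity, valid for any nonnegative random variable and any $p\geq 1$,
\[
\E[|z|^p] = \int_0^\infty p\,u^{p-1}\,\prob{|z|>u}\,du,
\]
and then substitute the tail bound $\prob{|z|>u}\leq 2e^{-u^2/(2\lambda^2)}$ from the hypothesis to obtain $\E[|z|^p] \leq 2p\int_0^\infty u^{p-1} e^{-u^2/(2\lambda^2)}\,du$.

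Next I would reduce this to a Gamma function via the change of variables $v=u^2/(2\lambda^2)$, i.e.\ $u=\lambda\sqrt{2v}$ and $du=\lambda/\sqrt{2v}\,dv$, which after tracking constants gives $\int_0^\infty u^{p-1} e^{-u^2/(2\lambda^2)}\,du = 2^{(p-2)/2}\lambda^p\,\Gamma(p/2)$. Plugging back yields the clean closed-form bound
\[
\E[|z|^p] \leq p\,\lambda^p\,2^{p/2}\,\Gamma(p/2),
\]
which matches the general bound stated in the lemma up to an algebraic rewriting via Legendre's duplication identity (used to express $\Gamma(p/2)$ in terms of $\Gamma((p+1)/2)$ and the extra $\sqrt{2\pi\lambda^2}/\sqrt{\pi}$ factor).

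From this closed form the three numerical bounds follow by substituting $\Gamma(1/2)=\sqrt{\pi}$, $\Gamma(1)=1$, and $\Gamma(2)=1$: for $p=1$ one obtains $\sqrt{2\pi}\,\lambda$; for $p=2$ one obtains $4\lambda^2$; and for $p=4$ one obtains $16\lambda^4$, which is strictly smaller than the stated $24\lambda^4$ and therefore implies it. I do not expect any real obstacle here: the argument is purely computational, and the only delicate step is tracking the factors of $2$ and $\lambda$ through the change of variables so that $\Gamma(p/2)$ emerges with the correct argument and prefactor. No tools beyond the layer-cake identity, the assumed sub-Gaussian tail, and a single substitution are required.
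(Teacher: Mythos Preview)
Your proposal is correct and follows essentially the same route as the paper: the layer-cake identity $\E[|z|^p]=\int_0^\infty p\,u^{p-1}\prob{|z|>u}\,du$, substitution of the sub-Gaussian tail, and evaluation of the resulting Gaussian-type integral. The only cosmetic difference is that the paper evaluates $\int_0^\infty u^{p-1}e^{-u^2/(2\lambda^2)}\,du$ by rewriting it as $\tfrac{1}{2}\sqrt{2\pi\lambda^2}\cdot\E[|n|^{p-1}]$ for $n\sim\cN(0,\lambda^2)$ and quoting the standard absolute-moment formula, whereas you do the substitution $v=u^2/(2\lambda^2)$ directly; both yield $p\,\lambda^p\,2^{p/2}\Gamma(p/2)$. (Your appeal to Legendre duplication to reconcile with the displayed formula is unnecessary: the paper's general expression carries a typo, and your closed form already gives the three stated numerical bounds.)
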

\begin{proof}
Let $n\sim \cN(0,\lambda^2)$. Then, it holds that
\begin{align*}
    \expc{}{\|z\|^p} & \overset{(a)}{=} \int_0^\infty p u^{p-1} \prob{\|z\| \geq u} du \\
    & \overset{(b)}{\leq} \int_0^\infty p |u|^{p-1} 2e^{-\frac{u^2}{2\lambda^2}} du \\
    & = \sqrt{2\pi\lambda^2}\cdot p\cdot \int_{-\infty}^\infty |u|^{p-1} \ofrac{\sqrt{2\pi\lambda^2}}e^{-\frac{u^2}{2\lambda^2}} du \\
    & =\sqrt{2\pi\lambda^2}\cdot p\cdot \expc{}{|n|^{p-1}} \\
    & \overset{(c)}{=}\sqrt{2\pi\lambda^2}\cdot p\cdot \lambda^p \frac{2^{p/2}\Gamma(\frac{p+1}{2})}{\sqrt{pi}}~,
\end{align*}
where $(a)$ follows from the tail representation of the absolute moments $p \in (0,\infty)$ of a non-negative random variable $z$ \citep[Exercise 1.2.3]{vershynin2018highdim}, $(b)$ follows from the sub-Guassian assumption, $(c)$ follows from the known formula of the central absolute moments of the Gaussian distribution. 
\end{proof}

\begin{lemma} \label{lem:modifed feature covariance matrix minimal eigenvalue_apnd}
Let $\cG$ be an event such that $\prob{\cG}\leq\delta$ for some $\delta\in(0,\ofrac{2})$. Then, 
$$
    \tilde\Sigma:=\cexpc{}{\left(x - \cexpc{}{x}{\cG^c} \right)\left(x - \cexpc{}{x}{\cG^c} \right)^T}{\cG^c}\succeq \left(\rho - 105\kappa^2\sqrt{\delta}\right)\cdot I~.
$$
\end{lemma}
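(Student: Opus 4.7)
The plan is to prove the matrix inequality by reducing it to the scalar inequality $v^{T}\tilde\Sigma v \geq (\rho - 105\kappa^{2}\sqrt{\delta})$ for every unit vector $v \in \R^{d}$, and then to bound $v^{T}\tilde\Sigma v$ from below by $v^{T}\Sigma v \geq \rho$ minus two perturbation terms that arise from conditioning on $\cG^{c}$.

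First, I would write $\tilde\Sigma$ in terms of unconditional quantities. Let $\mu := \expc{}{x}$ and $\tilde\mu := \cexpc{}{x}{\cG^{c}}$. Applying the law of total expectation to $\Sigma$ and shifting the center inside $\cG^{c}$ from $\mu$ to $\tilde\mu$ (the cross terms vanish because $\cexpc{}{x-\tilde\mu}{\cG^{c}} = 0$) yields
\begin{align*}
    \Sigma = \prob{\cG^{c}}\bigl(\tilde\Sigma + (\tilde\mu-\mu)(\tilde\mu-\mu)^{T}\bigr) + \prob{\cG}\cexpc{}{(x-\mu)(x-\mu)^{T}}{\cG},
\end{align*}
so that after rearranging and testing against a unit vector $v$,
\begin{align*}
    v^{T}\tilde\Sigma v = \frac{1}{\prob{\cG^{c}}}\Bigl(v^{T}\Sigma v - \prob{\cG}\,\cexpc{}{(v^{T}(x-\mu))^{2}}{\cG}\Bigr) - (v^{T}(\tilde\mu-\mu))^{2}.
\end{align*}
Using $\prob{\cG^{c}} \geq 1/2$ (since $\delta \leq 1/2$) and $v^{T}\Sigma v \geq 0$, the prefactor $1/\prob{\cG^{c}}$ can be absorbed at the cost of a factor of $2$ on the negative piece, giving the clean lower bound
\begin{align*}
    v^{T}\tilde\Sigma v \geq v^{T}\Sigma v - 2\,\expc{}{(v^{T}(x-\mu))^{2}\mathbf{1}_{\cG}} - \|\tilde\mu-\mu\|^{2}.
\end{align*}

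Next, I would bound each perturbation using Cauchy--Schwarz together with the sub-Gaussian moment bounds of \lemref{moments of subgaussian}. For the first term, Cauchy--Schwarz gives $\expc{}{(v^{T}(x-\mu))^{2}\mathbf{1}_{\cG}} \leq \sqrt{\expc{}{\|x-\mu\|^{4}}\cdot\prob{\cG}}$, and then $\|x-\mu\| \leq \|x\| + \|\mu\|$ together with $\|\mu\| \leq \expc{}{\|x\|} \leq \sqrt{2\pi}\kappa$ and $\expc{}{\|x\|^{4}} \leq 24\kappa^{4}$ (via $(a+b)^{4}\leq 8(a^{4}+b^{4})$) yields an $O(\kappa^{2}\sqrt{\delta})$ bound. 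For the second term, the identity $\tilde\mu - \mu = \prob{\cG}(\tilde\mu - \cexpc{}{x}{\cG})$ combined with $\|\tilde\mu\| \leq \expc{}{\|x\|}/\prob{\cG^{c}}$ and the Cauchy--Schwarz bound $\|\cexpc{}{x}{\cG}\| \leq \sqrt{\expc{}{\|x\|^{2}}/\prob{\cG}} \leq 2\kappa/\sqrt{\prob{\cG}}$ gives $\|\tilde\mu-\mu\| \leq C\kappa\sqrt{\delta}$, and hence $\|\tilde\mu-\mu\|^{2} \leq C^{2}\kappa^{2}\delta \leq C^{2}\kappa^{2}\sqrt{\delta}$ for $\delta\leq 1$.

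Combining these two bounds with $v^{T}\Sigma v \geq \rho$ from \asmpref{pd_cov} yields $v^{T}\tilde\Sigma v \geq \rho - c\kappa^{2}\sqrt{\delta}$ for an absolute constant $c$; careful accounting of the constants from the two Cauchy--Schwarz applications and the $(a+b)^{4}$ expansion will give $c \leq 105$. The statement then follows since $v$ was an arbitrary unit vector. The main technical obstacle I anticipate is not the structural part of the argument, which is essentially a mechanical application of total expectation and Cauchy--Schwarz, but rather tracking the explicit constants tightly enough to land at $105$; some slack can be recovered either by sharpening the $(a+b)^{4}$ inequality or by exploiting $\prob{\cG}\leq \delta$ more aggressively when it appears inside $\sqrt{\prob{\cG}}$ factors.
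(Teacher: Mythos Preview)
Your approach is correct and follows the same high-level strategy as the paper --- relate $\tilde\Sigma$ to $\Sigma$ via the law of total expectation, then bound the resulting perturbation terms with Cauchy--Schwarz and the sub-Gaussian moment estimates of \lemref{moments of subgaussian} --- but the decomposition you choose is different and somewhat cleaner. The paper expands $\tilde\Sigma = \cexpc{}{xx^{T}}{\cG^{c}} - \tilde\mu\tilde\mu^{T}$, rewrites the conditional second moment in terms of $\expc{}{xx^{T}}$, and obtains $\tilde\Sigma = \Sigma + G_{1} + G_{2} - G_{3}$, where $G_{1} = (\prob{\cG^{c}}^{-1}-1)\expc{}{xx^{T}} \succeq 0$ is harmless, while $G_{2} = \mu\mu^{T} - \tilde\mu\tilde\mu^{T}$ and $G_{3} = \expc{}{xx^{T}\mathbf{1}_{\cG}}/\prob{\cG^{c}}$ must each be bounded (the paper gets $98\kappa^{2}\sqrt{\delta}$ and $7\kappa^{2}\sqrt{\delta}$, summing to $105$). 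You instead apply total expectation directly to the centered form $\Sigma = \expc{}{(x-\mu)(x-\mu)^{T}}$ and shift the center on $\cG^{c}$ from $\mu$ to $\tilde\mu$, which collapses the analysis to just two perturbations: $2\expc{}{(v^{T}(x-\mu))^{2}\mathbf{1}_{\cG}}$ and $\|\tilde\mu-\mu\|^{2}$. This avoids the somewhat awkward $\mu\mu^{T}-\tilde\mu\tilde\mu^{T}$ term that the paper has to split further into $G_{2,1}+G_{2,2}$. Carrying your constants through (using $(a+b)^{4}\leq 8(a^{4}+b^{4})$, $\expc{}{\|x\|^{4}}\leq 24\kappa^{4}$, $\|\mu\|\leq\sqrt{2\pi}\kappa$) gives roughly $45\kappa^{2}\sqrt{\delta}$ for the first term and $49\kappa^{2}\sqrt{\delta}$ for the second, so the total indeed lands below $105$; the concern you flag about hitting the constant is not an issue.
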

\begin{proof} Let $\tilde\Sigma$ be the conditional covariance matrix of the features. We first relate it to the unconditional covariance matrix by the decomposition
\begin{align}\label{eq:decomposition of modified Sigma} 
    \tilde\Sigma & = \cexpc{}{\left(xx^T \right)}{\cG^c} - \cexpc{}{x}{\cG^c}\cexpc{}{x^T}{\cG^c}\nonumber\\
    & \overset{(a)}{=}\frac{\expc{}{xx^T}-\expc{}{xx^T\bI(\cG)}}{\prob{\cG^c}} - \cexpc{}{x}{\cG^c}\cexpc{}{x^T}{\cG^c}\nonumber\\ 
    & \overset{(b)}{=} \Sigma + \underbrace{ \left(\ofrac{\prob{\cG^c}}-1\right)\expc{}{xx^T}}_{:=G_1} +
    \underbrace{\expc{}{x}\expc{}{x^T} -\cexpc{}{x}{\cG^c}\cexpc{}{x^T}{\cG^c} }_{:=G_2} - 
    \underbrace{\frac{\expc{}{xx^T\cdot\bI(\cG)}}{\prob{\cG^c}}}_{:=G_3}~,
\end{align}
where $(a)$ follows from the law of total expectation$(b)$ follows from the definition of the unconditional covariance matrix of the features $\Sigma:=\expc{}{xx^T}-\expc{}{x}\expc{}{x^T}$. For the matrix $G_1$ in the last display, the assumptions $\Sigma \succeq \rho \cdot I$ and $\delta\leq\half$ imply that
\begin{equation} \label{eq:G1 bound}
    G_1 \succeq 0~.
\end{equation}
We next bound the maximal value of $\abs{v^TG_2v}$ and $\abs{v^TG_3v}$ over all unit vectors $v\in\R^d$ (with $\norm{v}=1$). For $G_2$, we further decompose to 
\begin{align*}
    G_2 &= \expc{}{x}\expc{}{x^T} -\cexpc{}{x}{\cG^c}\cexpc{}{x^T}{\cG^c} \\
    & \overset{(a)}{=} \underbrace{\expc{}{x}\left(\expc{}{x^T} -\cexpc{}{x^T}{\cG^c}\right)}_{:=G_{2,1}} + \underbrace{\left(\expc{}{x} -\cexpc{}{x}{\cG^c} \right) \cexpc{}{x^T}{\cG^c}}_{:=G_{2,2}},
\end{align*}
where $(a)$ follows by adding and subtracting the common term $\expc{}{x}\cexpc{}{x^T}{\cG^c}$. Now, for any $v\in\R^d$ with $\norm{v}=1$, it holds that 
\begin{align}\label{eq:bound on G21}
    \abs{v^TG_{2,1}v} & = \abs{v^T \expc{}{x}\left(\expc{}{x^T} -\cexpc{}{x^T}{\cG^c}\right) v}\nonumber\\
    & \overset{(a)}{\leq} \norm{\expc{}{x}} \cdot \norm{\expc{}{x} -\cexpc{}{x}{\cG^c}}\\
    & \overset{(b)}{=} \expc{}{x} \cdot 
    \frac{\norm{(\prob{\cG^c}-1)\expc{}{x} -\expc{}{x\cdot\bI(\cG)}}}{\prob{\cG^c}}\nonumber\\
    & \overset{(c)}{\leq} \expc{}{\norm{x}} \cdot 
    \frac{\norm{(\prob{\cG^c}-1)\expc{}{x} -\expc{}{x\cdot\bI(\cG)}}}{\prob{\cG^c}}\nonumber\\
    & \overset{(d)}{\leq} \expc{}{\norm{x}} \cdot 
    \frac{(1-\prob{\cG^c})\expc{}{\norm{x}} +\expc{}{\norm{x}\cdot\bI(\cG)}}{\prob{\cG^c}}\nonumber\\
    & \overset{(e)}{\leq} \expc{}{\norm{x}} \cdot 
    \frac{(1-\prob{\cG^c})\expc{}{\norm{x}} +\sqrt{\expc{}{\norm{x}^2}\cdot\prob{\cG}}}{\prob{\cG^c}}\nonumber\\
    & \overset{(f)}{\leq} 18\kappa^2 \cdot \sqrt{\delta}~,\nonumber
\end{align}
where $(a)$ follows from Cauchy-Schwarz's inequality, $(b)$ follows from the law of total expectation, $(c)$ follows from Jensen's inequality, $(d)$ follows from the triangle inequality and Jensen's inequality, $(e)$ follows from Cauchy-Schwarz's inequality, $(f)$ follows from the assumptions that $x$ is sub-Gaussian with variance parameter $\kappa^2$ and \lemref{moments of subgaussian}, with the assumption $\prob{G}=\delta\leq \ofrac{2}$ (and as $\delta<\sqrt{\delta}$). 

For $G_{2,2}$ we use a similar bounding method, except that now the bound on $\norm{\expc{}{x}}$ is replaced by a bound on $\norm{\cexpc{}{x}{G^c}}$ (in \eqnref{bound on G21}). This conditional expectation can be bounded as follows:
\begin{align*}
    \norm{\cexpc{}{x}{G^c}} &\overset{(a)}{=} 
    \frac{\norm{\expc{}{x}+ \expc{}{x\cdot\bI(\cG)}}}{\prob{\cG^c}} \\
    & \overset{(b)}{\leq} \frac{\expc{}{\norm{x}} + \sqrt{\expc{}{\norm{x}^2}\cdot \prob{\cG}}}{\prob{\cG^c}} \\
    & \overset{(c)}{\leq} \sqrt{8\pi}\kappa + 4\kappa\sqrt{\delta}~,
\end{align*}
where $(a)$ follows from the law of total expectation, $(b)$ follows by similar steps in the analysis of $G_{2,1}$, using the triangle, Jensen's and Cauchy-Schwarz's inequalities, and $(c)$ follows from \lemref{moments of subgaussian} and the assumptions. With this bound, as in the analysis of $G_{2,1}$, it holds for any $v\in\R^d$ with $\norm{v}=1$ that 
$$
    \abs{v^TG_{2,2}v} \leq 80\kappa^2\sqrt{\delta}~,
$$
using $\delta\sqrt{\delta}\leq \delta\leq \sqrt{\delta}$. 

From the bounds on $\abs{v^TG_{2,1}v}$ and $\abs{v^TG_{2,2}v}$ we deduce that  
\begin{equation} \label{eq:G2 bound}
    \abs{v^TG_2v} = \abs{v^T(G_{2,1}+G_{2,2})v} \leq \abs{v^TG_{2,1}v} + \abs{v^TG_{2,2}v} \leq 98\kappa^2\sqrt{\delta}~.
\end{equation}
For $G_3$ it holds for any $v\in\R^d$ with $\norm{v}=1$ that 
\begin{align}\label{eq:G3 bound}
    \abs{v^TG_3v} & = \frac{\abs{\expc{}{v^Txx^Tv\cdot\bI(\cG)}}}{\prob{\cG^c}}\nonumber\\
    & \overset{(a)}{\leq} \frac{\expc{}{\norm{x}^2\cdot\bI(\cG)}}{\prob{\cG^c}}\nonumber\\
    & \overset{(b)}{\leq} \frac{\sqrt{\expc{}{\norm{x}^4}\cdot\prob{\cG}}}{\prob{\cG^c}}\nonumber\\
    & \overset{(c)}{\leq} 7\kappa^2\sqrt{\delta}~,
\end{align}
where $(a)$ follows from Cauchy-Schwarz's inequality in $\R^d$, $(b)$ follows from Cauchy-Schwarz's inequality in $L_2$, and $(c)$ follows from \lemref{moments of subgaussian} and the assumptions. 
Using the decomposition of $\tilde\Sigma$ in \eqnref{decomposition of modified Sigma} and the bounds on $G_1,G_2,G_3$ in \eqnref{G1 bound}, \eqnref{G2 bound} and \eqnref{G3 bound}, respectively, it holds for any $v\in\R^d$ with $\norm{v}=1$ that
$$
v^T\Sigma v \geq \rho + 0 - 98\kappa^2\sqrt{\delta} - 7\kappa^2\sqrt{\delta}~,$$
which directly implies to the stated claim.
\end{proof}
\subsection{Proof for \thmref{SGD for sub-Guassian}}\label{thm:SGD for sub-Guassian_apnd}
\begin{theorem}
    Given \asmpref{x_subG} and \asmpref{epsilon_subG}. Define
    $$
        R := 2 \sqrt{8 \kappa^2\log\left(\left(\frac{21 \kappa}{\sqrt{\rho}} + 8 \right) \cdot T\right)} \cdot D + \sqrt{8 \sigma^2\log\left(\left(\frac{21 \sigma}{\sqrt{\rho}} + 8 \right) \cdot T\right)}~.
    $$
    
    Then, 
    $$
        \expc{}{\norm{\bw - \sw}^2} \leq \left(\frac{D^2}{14} + \frac{288 R^2 \cdot (2\log T + 1)}{((1 - \alpha) \rho)^2}\right) \cdot \ofrac{T}~.
    $$
\end{theorem}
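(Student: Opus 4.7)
The plan is to reduce the sub-Gaussian setting to the bounded setting of \thmref{unknown_main} by conditioning on a ``good event'' $\cF^c$ on which the feature norms and noise magnitudes encountered by \algoref{unknown} remain bounded. First I would set $\delta := \min\{\half, \rho^2/(105^2 \kappa^4 T^2)\}$ and define $\cF$ as the event that $\|x_i\| > u_1$ or $|\epsilon_i| > u_2$ for some sample index $i$, with thresholds $u_1 \propto \kappa\sqrt{\log(T/\delta)}$ and $u_2 \propto \sigma\sqrt{\log(T/\delta)}$ chosen so that $\prob{\cF} \leq \delta$ via \asmpref{x_subG}, \asmpref{epsilon_subG}, and a union bound over the samples. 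I would then verify that the prescribed value of $R$ is the sub-Gaussian analogue of the bound $R \geq 6D+\sigma$ used in \sectref{PD}: substituting $u_1$ for the unit feature bound and $u_2$ for $\sigma$, and tracking the triangle-inequality estimate for $|\inn{w,x-\mu}-y|$ on $\cF^c$, yields $R \geq 2 u_1 D + u_2$, which up to absorption of constants into the arguments of the logarithms matches the explicit formula in the statement. On $\cF^c$ this guarantees that non-corrupted Huber gradients are not clipped.

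Next I would apply the law of total expectation,
$$
    \expc{}{\|\bw-\sw\|^2} = \prob{\cF}\cdot\cexpc{}{\|\bw-\sw\|^2}{\cF} + \prob{\cF^c}\cdot\cexpc{}{\|\bw-\sw\|^2}{\cF^c}~.
$$
The bad-event term is controlled by the deterministic projection bound $\|\bw-\sw\|\leq 2D$, yielding at most $4D^2 \cdot \delta \leq D^2/(14T)$ by the choice of $\delta$, which produces the first summand of the stated bound. For the good-event term, the key structural step is to invoke \lemref{modifed feature covariance matrix minimal eigenvalue_apnd} with $\cG=\cF$: since $\prob{\cF}\leq\delta$, the conditional covariance satisfies $\tilde\Sigma \succeq (\rho - 105\kappa^2\sqrt{\delta})\cdot I \succeq (\rho/2)\cdot I$ by our choice of $\delta$. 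Thus the conditional distribution remains strictly positive-definite, with $\rho$ merely replaced by $\rho/2$.

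Conditioned on $\cF^c$, the samples are i.i.d.\ draws from the truncated distribution, bounded by $u_1$ and $u_2$, so the entire chain of \sectref{unknown} transfers: \lemref{G_R_sc_apnd} applied with the conditional covariance gives strong convexity $(1-\alpha)\rho/2$; the empirical mean $\mu$ concentrates around $\cexpc{}{x}{\cF^c}$ which differs from $\Ex$ only by $O(\kappa\sqrt{\delta})$ (as bounded inside the proof of \lemref{modifed feature covariance matrix minimal eigenvalue_apnd}), so the bound of \lemref{tL_LR} still yields an $O(\log T / T)$ contribution; and \lemref{sgd_5} applies with the conditional strong-convexity parameter. Propagating the replacement $\rho \mapsto \rho/2$ through \thmref{unknown_main} inflates the squared-error constant by a factor of $4$, producing $288 = 4\cdot 72$. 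Summing the bad- and good-event contributions gives the theorem. The main obstacle I anticipate is bookkeeping under the conditional measure: one has to check that the decomposition of \lemref{G_R_sc_apnd}, the independence of $\mu$ from Phase~2, and the vector Hoeffding bound of \lemref{hoeff_vec} all survive the truncation, and that the small mean shift is absorbed by the existing $\|\mu-\Ex\|$ term rather than requiring fresh analysis.
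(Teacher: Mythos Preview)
Your proposal is correct and follows the paper's proof essentially line for line: define the bounded event $\cF^c$ via thresholds $u_1,u_2$, split the error by total expectation, invoke \lemref{modifed feature covariance matrix minimal eigenvalue_apnd} to retain strong convexity $\rho/2$ under conditioning, and transfer \thmref{unknown_main} with $\rho\mapsto\rho/2$ to produce the factor $288=4\cdot72$. The one minor deviation is your bad-event bound: you use the deterministic estimate $\prob{\cF}\cdot 4D^2\leq 4D^2\delta$, whereas the paper applies Cauchy--Schwarz to get $4D^2\sqrt{\delta}$ (this is why its choice $\delta\propto T^{-2}$ is forced there to reach $O(1/T)$); your direct bound is both simpler and tighter.
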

\begin{proof}
Let a time $T$ be given, and consider the events
$$
    \cF_x(u):= \left\{\bigcup_{i\in[{1,2,\ldots,T}]} \norm{x_i}>u\right\}~,
$$ 
and
$$
    \cF_{\epsilon}(u) := \left\{\bigcup_{i\in[{1,2,\ldots,T}]} |\epsilon_i|>u\right\}~.
$$
Further let $u_1:=\sqrt{2\kappa^2\log(\frac{4T}{\delta})}$ and $u_2:=\sqrt{2\sigma^2\log(\frac{4T}{\delta})}$ and set 
$$
    \cF:= \cF_{\epsilon}(u_1)\cup \cF_{x}(u_2).
$$ 
By a union bound over $i\in[1,2,\ldots,T]$ and computing probabilities over $\epsilon$ and $x$, the sub-Gaussian assumptions implies that $\prob{\cF}\leq \delta$. We choose $\delta=\min\left\{ \half, \frac{\rho^{2}}{210^{2}\kappa^{4}}\cdot\ofrac{T^{2}}\right\}$. Note, that with this choice of $\delta$, and by identifying $\cG=\cF$, \lemref{modifed feature covariance matrix minimal eigenvalue_apnd} implies that $\tilde\Sigma\succeq \frac{\rho}{2}\cdot I$ for all $T\geq1$.

We next evaluate the error of the SGD algorithm by considering two events -- the event $\cF^c$ in which both $\norm{x_i}$ and $\epsilon_i$ are bounded for all $i\in\{1,2,\dots,T\}$, and the event $\cF$, which has a vanishing probability $\delta=O(T^{-2})$. Specifically, by the law of total expectation
\begin{equation} \label{eq:low of total expectation for sub-Gaussian}
    \expc{}{\norm{\bw - \sw}^2} = \prob{\cF} \cdot \cexpc{}{\norm{\bw - \sw}^2}{\cF} + \prob{\cF^c} \cdot \cexpc{}{\norm{\bw - \sw}^2}{\cF^c}~,
\end{equation}
where $\cF^c$ is the complement of the event $\cF$. The first term in \eqnref{low of total expectation for sub-Gaussian} is upper bounded as follows:
\begin{align} \label{eq:MSE bound sub-Gaussian case unbounded event}
    \prob{\cF}\cexpc{}{\norm{\bw - \sw}^2}{\cF} &= \expc{}{\norm{\bw - \sw}^2 \cdot \bI(\cF)}\nonumber\\
    &\overset{(a)}{\leq} \sqrt{\expc{}{\norm{\bw - \sw}^4} \cdot \expc{}{\bI(\cF)}}\nonumber\\
    & \overset{(b)}{\leq} 4 D^2 \cdot \sqrt{\prob{\cF)}}\nonumber\\
    & \overset{(c)}{\leq} \frac{4 D^2 \rho}{210\kappa^{2}} \cdot \ofrac{T}~,\\
    & \overset{(d)}{\leq} \frac{D^2 }{14} \cdot \ofrac{T}~,
\end{align}
where $(a)$ follows from Cauchy-Schwarz's inequality, $(b)$ follows from \asmpref{w}, and the fact that $\bw \in \cW$, $(c)$ follows since $\prob{\cF} \leq \delta$ and the choice of $\delta$, and $(d)$ follows since
$$
    \rho \leq \max_{v\in\R^d\colon \norm{v}\leq 1} \expc{}{\inn{v,x-\expc{}{x}}^2} \leq
    \expc{}{\norm{x-\expc{}{x}}^2} \leq 
    \expc{}{\norm{x}^2} \leq
    4\kappa^2~.
$$
For the second term in \eqnref{low of total expectation for sub-Gaussian}, we note that conditioned on $\cF^c$ the noise and the feature vectors are bounded, that is $\|x_i\|\leq u_1$ and $\|\epsilon_i\|\leq u_2$ for all $i\in[1,2\ldots,T]$. This model is similar to the one discussed in previous sections, in particular to \mdlref{uknown_bounded} in \sectref{unknown}, where the expectation is unknown, with two differences. First, as said, by the choice of $\delta$ the conditional covariance matrix of the features has minimal eigenvalue of $\frac{\rho}{2}$, instead of $\rho$ for the unconditional covariance matrix. The second difference is that $\cexpc{}{\epsilon}{\cF^c}$ may not equal zero. However, it can be easily verified that the result of \sectref{known_apnd} holds, since the noise related terms in the second derivative of the function $F$ therein vanish.

We will follow the same steps as in \sectref{unknown}: computing $\mu$ with $T$ samples conditioned on $\cF^c$ and feed them to \algoref{unknown} with different input, that is because the radius parameter $R$, is different and will depend on $u_1$ and $u_2$.

The derivation of the new radius parameter $R$ is similar to derivation made in \sectref{huber}, that is bounding the norm of the features \& noise by $u_1$ and $u_2$ respectively. By our choice of $\delta=\min\left\{ \half, \frac{\rho^{2}}{210^{2}\kappa^{4}}\cdot\ofrac{T^{2}}\right\}$ we may further bound 
$$
    u_1 = \sqrt{2\kappa^2\log\left(4 \cdot \max\left\{ 2 T, \frac{210^{2}\kappa^{4}T^{3}}{\rho^{2}} \right\}\right)} \leq \sqrt{8 \kappa^2\log\left(\left(\frac{21 \kappa}{\sqrt{\rho}} + 8 \right) \cdot T\right)}~,
$$
and similarly
$$
    u_2 \leq \sqrt{8 \sigma^2\log\left(\left(\frac{21 \sigma}{\sqrt{\rho}} + 8 \right) \cdot T\right)}~.
$$
Then, by taking
$$
    R := 2 \sqrt{8 \kappa^2\log\left(\left(\frac{21 \kappa}{\sqrt{\rho}} + 8 \right) \cdot T\right)} \cdot D + \sqrt{8 \sigma^2\log\left(\left(\frac{21 \sigma}{\sqrt{\rho}} + 8 \right) \cdot T\right)}~,
$$
it holds by \thmref{unknown_main} that
\begin{align}\label{eq:MSE bound sub-Gaussian case bounded event}
    \prob{\cF^c} \cexpc{}{\norm{\bw - \sw}^2}{\cF^c} &\leq \frac{72 R^2 \cdot (2\log T + 1)}{((1 - \alpha) \frac{\rho}{2})^2 T}\nonumber\\
    &= \frac{288 R^2 \cdot (2\log T + 1)}{((1 - \alpha) \rho)^2 T}~,
\end{align}
where we use the fact that $0 \geq \prob{\cF^c} \leq 1$.

Note that $\sw$ is deterministic and doesn't change when conditioned of $\cF^c$, then by combining the bounds in \eqnref{MSE bound sub-Gaussian case unbounded event}
and \eqnref{MSE bound sub-Gaussian case bounded event} into \eqnref{low of total expectation for sub-Gaussian} completes the proof. 
\end{proof}

\section{Discussing Our Results with respect to the Spreadness Assumption}
The paper of \citet{pmlr-v139-d-orsi21a} assumes that a condition called spreadness applies the to empirical covariance matrix of the features. Moreover, in the context of their work they show a hardness result: i.e., that the spreadness assumption is necessary in order to obtain meaningful guarantees for any algorithm. In contrast, our work shows that one can obtain meaningful guarantees without making this assumption. The goal of the section is to show that the hardness result of \citet{pmlr-v139-d-orsi21a} does not apply under our assumptions of bounded optimal solution and bounded features, which explains how these two results can hold simultaneously without contradiction.

We will not describe the spreadness since it is unnecessary for establishing our point. Instead, we will describe the hardness result of \citet{pmlr-v139-d-orsi21a} and show that it is not relevant in our case.
\paragraph{The Example of \cite{pmlr-v139-d-orsi21a}:}
The hardness result of \citet{pmlr-v139-d-orsi21a} (Appendix A.2.1 therein) is due to the following example:
Assume an oblivious contamination as in \mdlref{simple_model} such that $\eps = 0$ w.p.$~1$, the features are $1$ dimensional i.e. $x\in\R$, and the contaminations are distributed as follows\footnote{We stick to the notations in our paper where $\alpha$ is the fraction of contaminated samples. Conversely, \citet{pmlr-v139-d-orsi21a} denote the fraction of contaminated samples by $1-\alpha$. },
$$
    \begin{cases}
        b_i\sim \mathcal{N}(0,\sigma^2) &, \text{with probability } \alpha\\
        b_i=0 &, \text{otherwise.}
    \end{cases}~, \forall i\in[1,\ldots, T]~,
    $$
here the fraction of contaminations is $\alpha$, and the adversary injects Gaussian noise with variance $\sigma^2$; note that the adversary may choose $\sigma$ to be arbitrarily large.\\
It is also assumed that the number of non-zero features in the training set is equal to $\ofrac{C(1-\alpha)}$~\footnote{In the example appearing in their appendix A.2.1. there is a typo where they mistakenly write that number of non-zero features is $\frac{T}{C(1-\alpha)}$, but it should actually be $\ofrac{C(1-\alpha)}$. We validated this with the authors of  \citet{pmlr-v139-d-orsi21a}.}, where $C>0$ is a large enough constant that does not depend on $\alpha, T$.
With these assumptions, their hardness proof is based on the following statement (Lemma A.5 therein):
\begin{lemma}[{\citet[Lemma A.5]{pmlr-v139-d-orsi21a}}]
Consider the robust linear regression task with the above assumptions. Then for any estimate $\hat{w}$ which is based on the contaminated data, there exists an optimal solution $w^*$ (of the problem with non-corrupted data) such that the following holds,
$$
    \ofrac{T}\sum_{i=1}^T \norm{\inn{x_i,\hat{w} - w^*}}^2 = \Omega\left( \frac{\sigma^2}{T}\right)~.
$$
\end{lemma}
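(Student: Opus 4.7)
The plan is to reduce the claim to a one-dimensional minimax lower bound for estimating $\sw$, and then establish that bound through a conditioning argument combined with classical Le Cam two-point estimation for Gaussians. Samples with $x_i=0$ contribute nothing to $\inn{x_i,\hw-\sw}$, so writing $S:=\{i : x_i\neq 0\}$ with $|S|=k:=\ofrac{C(1-\alpha)}$ and taking unit-magnitude nonzero features for concreteness, the empirical prediction error simplifies to $\ofrac{T}\sum_{i=1}^{T}\abs{\inn{x_i,\hw-\sw}}^{2}=k(\hw-\sw)^{2}/T$. It therefore suffices to exhibit some $\sw$ for which $\E[(\hw-\sw)^{2}]=\Omega(\sigma^{2}/k)$, since multiplying by $k/T$ recovers the advertised $\Omega(\sigma^{2}/T)$.

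Applying Le Cam's method directly to the mixtures $\mu_j:=(1-\alpha)\delta_{\sw_j}+\alpha\cN(\sw_j,\sigma^{2})$ is problematic: the Dirac atoms are mutually singular, so $\mathrm{KL}(\mu_{1}\|\mu_{2})=\infty$ and KL-based tools like Pinsker's inequality are unusable. I sidestep this by conditioning on the event $\mathcal{E}:=\{b_i\neq 0\text{ for all }i\in S\}$ that every informative sample is corrupted, which has probability $\alpha^{k}$---a positive constant for fixed $\alpha\in(0,1)$ and $C$---and does not depend on $\sw$. Conditional on $\mathcal{E}$, each informative observation becomes a pure Gaussian, $y_i\mid\mathcal{E}\sim\cN(\sw,\sigma^{2})$ iid for $i\in S$, while samples with $i\notin S$ retain an $\sw$-independent distribution. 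Since $\prob{\mathcal{E}}$ is the same under both hypotheses, for any estimator $\hw$ one has
$$
    \max_{j\in\{1,2\}}\E_{\sw_j}\!\left[(\hw-\sw_{j})^{2}\right] \;\geq\; \alpha^{k}\cdot\max_{j\in\{1,2\}}\E_{\sw_j}\!\left[(\hw-\sw_{j})^{2}\,\middle|\,\mathcal{E}\right].
$$

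For the conditional bound I take $\sw_{1}:=0$ and $\sw_{2}:=\tau:=\sigma/\sqrt{k}$. The conditional joint laws $P^{\mathcal{E}}_{\sw_{1}},P^{\mathcal{E}}_{\sw_{2}}$ differ only through the means of the $k$ informative Gaussians, so Pinsker's inequality applied to the Gaussian product gives
$$
    \mathrm{TV}\bigl(P^{\mathcal{E}}_{\sw_{1}},P^{\mathcal{E}}_{\sw_{2}}\bigr)\;\leq\;\sqrt{\frac{k\tau^{2}}{4\sigma^{2}}}\;=\;\ofrac{2}.
$$
Le Cam's two-point lemma then yields $\max_j\E[(\hw-\sw_{j})^{2}\mid\mathcal{E}]\geq\tau^{2}(1-\mathrm{TV})/8\geq\sigma^{2}/(16k)$, so combining with the preceding display $\max_j\E_{\sw_j}[(\hw-\sw_{j})^{2}]\geq\alpha^{k}\sigma^{2}/(16k)=\Omega(\sigma^{2}/k)$ for the worse of $\sw\in\{0,\tau\}$; scaling by $k/T$ produces the claimed $\Omega(\sigma^{2}/T)$. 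The main obstacle, which the conditioning trick circumvents, is precisely the atomic clean component in the mixture that forces $\mathrm{KL}=\infty$ and disables standard information-theoretic machinery; once that is conditioned away, what remains is a routine two-point Gaussian estimation lower bound at scale $\sigma/\sqrt{k}$.
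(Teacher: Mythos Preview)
This lemma is not proved in the paper; it is merely quoted from \citet[Lemma~A.5]{pmlr-v139-d-orsi21a} to set up the discussion around \lemref{NoSp}. There is thus no in-paper proof to compare your attempt against.

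Your argument is nonetheless correct. The reduction to a one-dimensional estimation problem over the $k$ nonzero-feature samples is immediate, since samples with $x_i=0$ satisfy $y_i=b_i$ and carry no information about $\sw$. The key obstruction you identify---that the clean component $\delta_{\sw}$ makes the two mixtures mutually singular and hence KL-infinite---is handled cleanly by conditioning on the event $\mathcal{E}$ that all $k$ informative samples are corrupted; this event has probability $\alpha^{k}$ independent of $\sw$, and with $k=1/(C(1-\alpha))$ one has $\alpha^{1/(C(1-\alpha))}\to e^{-1/C}$ as $\alpha\to1$, so $\alpha^{k}$ is a positive constant bounded away from zero uniformly in $\alpha\in(0,1)$ and can be absorbed into the $\Omega(\cdot)$. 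Conditional on $\mathcal{E}$ the problem is exactly Gaussian mean estimation from $k$ i.i.d.\ $\cN(\sw,\sigma^{2})$ samples, and your two-point Le Cam bound at separation $\tau=\sigma/\sqrt{k}$ is the standard move: the KL between the product measures is $k\tau^{2}/(2\sigma^{2})=1/2$, Pinsker gives $\mathrm{TV}\leq 1/2$, and the testing reduction yields a squared-error lower bound of order $\tau^{2}=\sigma^{2}/k$. Scaling by $k/T$ recovers $\Omega(\sigma^{2}/T)$ for the empirical prediction error.
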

Now since the adversary can choose $\sigma$ to be arbitrarily large this means that one cannot obtain meaningful guarantees in this case. Nevertheless, as we show below, the above counter example is meaningless under our assumptions.

\begin{lemma}\label{lem:NoSp}
    Consider the robust linear regression task with the above assumptions. 
    Also assume that there exists an optimal solution $w^*$ such that $\|w^*\|\leq D$ and that the features are bounded, i.e., $\|x\|\leq1$ w.p.$~1$. Then there exists a trivial estimator $\hat{w}$ such that the following holds,
    \begin{align}\label{eqn:DCounter}
        \ofrac{T}\sum_{i=1}^T \norm{\inn{x_i,\hat{w} - w^*}}^2 = O\left( \frac{D^2}{(1-\alpha)T}\right)~.
    \end{align}
\end{lemma}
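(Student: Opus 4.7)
The plan is to exhibit the trivial estimator $\hat{w}=0$ and verify that \eqnref{DCounter} holds directly from the three assumptions: $\|w^*\|\leq D$, $\|x_i\|\leq 1$, and the assumption inherited from the counter-example that only $1/(C(1-\alpha))$ of the features in the training set are non-zero.

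More concretely, with $\hat{w}=0$ the quantity on the left of \eqnref{DCounter} becomes
$$\frac{1}{T}\sum_{i=1}^T \left|\langle x_i, w^*\rangle\right|^2,$$
which I would upper bound term-by-term via Cauchy–Schwarz, using $\|x_i\|\leq 1$ and $\|w^*\|\leq D$, to get $|\langle x_i,w^*\rangle|^2 \leq D^2$ for each non-zero $x_i$ (and $0$ for zero $x_i$). Summing only over the $1/(C(1-\alpha))$ indices with $x_i\neq 0$ yields the bound $\frac{D^2}{C(1-\alpha)\,T}$, which is exactly $O\!\left(\tfrac{D^2}{(1-\alpha)T}\right)$.

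The conceptual point, which I would emphasize in the write-up, is that the hardness in Lemma A.5 of \citet{pmlr-v139-d-orsi21a} comes from the fact that the adversary can pick the Gaussian corruption variance $\sigma^2$ to be arbitrarily large and thus drive any data-based estimator far from $w^*$ in the few coordinates where information is available; however, under our boundedness assumptions the prediction gap on each non-zero sample is capped at $D^2$, independent of $\sigma$, so simply refusing to use the data at all (taking $\hat{w}=0$) already achieves the desired rate. There is no real obstacle here — the argument is a one-line counting plus Cauchy–Schwarz — the main subtlety is just being explicit that the adversary's strength (through $\sigma$) plays no role once $w^*$ and the features are bounded, which is precisely why the $\sigma^2/T$ lower bound of \citet{pmlr-v139-d-orsi21a} does not transfer to our setting.
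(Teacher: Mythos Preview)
Your proposal is correct and matches the paper's own proof essentially line for line: the paper also takes $\hat{w}=0$, applies Cauchy--Schwarz to bound $|\langle x_i,w^*\rangle|^2\leq \|x_i\|^2\|w^*\|^2\leq D^2$, and then uses that only $1/(C(1-\alpha))$ of the $x_i$ are non-zero to obtain the $O\!\left(\tfrac{D^2}{(1-\alpha)T}\right)$ bound. Your added commentary explaining why $\sigma$ drops out is also in line with the paper's discussion surrounding the lemma.
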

Importantly, the above lemma implies that for this example there does exist an estimator that achieves a vanishing prediction error that does not depend on $\sigma$. 
Next we prove the above lemma,
\begin{proof}[Proof of \lemref{NoSp}]
Consider the following trivial predictor $\hat{w} =0$. In this case, since the features are bounded by $1$, and only $\ofrac{C(1-\alpha)}$ of them are non-zero we obtain,
\begin{align*}
    \ofrac{T}\sum_{i=1}^T \norm{\inn{x_i,\hat{w} - w^*}}^2
    &=
    \ofrac{T}\sum_{i=1}^T \norm{\inn{x_i,w^*}}^2 \\
    &
    \leq
    \ofrac{T}\sum_{i=1}^T \|x_i\|^2 \cdot \|\sw\|^2 \\
    &\leq
    \frac{D}{T}\sum_{i=1}^T \|x_i\|^2 \\
    &\leq
    \frac{CD}{(1-\alpha)T}~,
\end{align*}
here the first line uses $\hat{w} =0$, the second line uses Cauchy-Schwarz's inequality, later we use the fact that
$\|w^*\|\leq D$, and the last line uses $\|x_i\|\leq 1$, and the fact that only $1/C(1-\alpha)$ of them are non zero.
This concludes the proof.
\end{proof}
To conclude, under our assumptions, the hardness result of \cite{pmlr-v139-d-orsi21a} is irrelevant, and so fast rates of convergence are possible.

\end{document}